\DeclareMathOperator*{\argmin}{arg\,min}
\DeclareMathOperator*{\argmax}{arg\,max}
\newcommand{\Y}{\mathbf{Y}}
\newcommand{\y}{\mathbf{y}}
\newcommand{\M}{\mathbf{M}}
\newcommand{\X}{\mathbf{X}}
\newcommand{\A}{\mathbf{A}}
\renewcommand{\L}{\mathbf{L}}
\newcommand{\I}{\mathbf{I}}
\renewcommand{\S}{\mathbf{S}}
\newcommand{\T}{\mathbf{T}}
\newcommand{\C}{\mathbf{C}}
\newcommand{\U}{\mathbf{U}}
\renewcommand{\C}{\mathbf{C}}
\newcommand{\blu}{\textcolor{blue}}
\newcommand{\red}{\textcolor{red}}
\newcommand{\q}{l} 
\title[Dimensionality Reduction: A Unifying Probabilistic Perspective]{Dimensionality Reduction as Probabilistic Inference}
\author{
   \Name{Aditya Ravuri},
   \Name{Francisco Vargas},
   \Name{Vidhi Lalchand} \and
   \Name{Neil D. Lawrence} \\
   \addr University of Cambridge
}
\begin{document}

\setlength{\parindent}{0pt} 
\maketitle

\begin{abstract}

Dimensionality reduction (DR) algorithms compress high-dimensional data into a lower dimensional representation while preserving important features of the data. DR is a critical step in many analysis pipelines as it enables visualisation, noise reduction and efficient downstream processing of the data. In this work, we introduce the \textit{ProbDR} variational framework, which interprets a wide range of classical DR algorithms as probabilistic inference algorithms in this framework. ProbDR encompasses PCA, CMDS, LLE, LE, MVU, diffusion maps, kPCA, Isomap, (t-)SNE, and UMAP. In our framework, a low-dimensional latent variable is used to construct a covariance, precision, or a graph Laplacian matrix, which can be used as part of a generative model for the data. Inference is done by optimizing an evidence lower bound. We demonstrate the internal consistency of our framework and show that it enables the use of probabilistic programming languages (PPLs) for DR. Additionally, we illustrate that the framework facilitates reasoning about unseen data and argue that our generative models approximate Gaussian processes (GPs) on manifolds. By providing a unified view of DR, our framework facilitates communication, reasoning about uncertainties, model composition, and extensions, particularly when domain knowledge is present.

\end{abstract}


\section{Introduction}

Many experimental data pipelines, for example, in single-cell biology, generate noisy, high-dimensional data that is hypothesised to lie near a low-dimensional manifold.  Dimensionality reduction algorithms have been used for such problems to find low-dim. embeddings of the data and enable efficient downstream processing. However, to better encode important characteristics of the high-dimensional data, quantify and reduce noise, and remove confounders, a probabilistic approach is needed, especially to encode context specific information.\\

The key motivation for this work is that probabilistic models and interpretations enable composability of assumptions, model extension, and aid communication through explicit definition of priors, model and inference algorithm \citep{zoubin_pml,bda3}. In single-cell data analysis, inductive biases have been encoded via priors in GPLVMs, for example pseudotime with von Mises priors and periodic covariances \citep{grandprix, vidhi-emma-pseudotime}. In the context of DR, probabilistic interpretations have offered ways to deal with missing data and formulate probabilistic mixtures \citep{ppca}. \\

A number of algorithms, such as \textbf{PCA, FA, GMMs, NMF, LDA, ICA} \citep{pml-ii} are known to have probabilistic interpretations, wherein the generative model for $n$ independent high ($d$-)dimensional data points $\Y \equiv \begin{bmatrix} \Y_{1:} & ... & \Y_{n:} \end{bmatrix}^T \in \mathbb R^{n \times d}$ is,
\begin{align}
 \X_{i:} &\sim p(.), \nonumber \\
 \Y_{i:} | \X_{i:} &\sim \text{ExponentialFamily}(f(\X_{i:}))
\label{eqn:gp}
\end{align}
where $\X \in \mathbb R^{n \times \q}$ is a matrix-valued random variable of corresponding low ($\q$-)dimensional embeddings/latent variables. The inference process is typically full-form (i.e. unamortised) as inference occurs for the full matrix $\X$. Vanilla \textbf{GPLVMs} \citep{gplvm} and \textbf{VAEs} \citep{vae} were also designed with such generative models, with the map between the latents and the data distribution's parameters $f$ being described using a GP and a neural network respectively, rather than a linear function. Our work provides a novel probabilistic perspective unifying a wider class DR algorithms not known to have interpretations as inferences of probabilistic models, to the best of our knowledge. We list our contributions below, summarize them in \cref{fig:probdr_summary} and motivate the work below.
\begin{itemize}
 \item We introduce the ProbDR model framework, and show how \textbf{SNE, t-SNE and UMAP} correspond to different instances of the inference algorithm under our framework.
 \item We show that many DR methods estimating an embedding as eigenvectors of a PSD matrix perform a two-step process (referred to henceforth as ``2-step MAP'') where,
 \begin{enumerate}
     \item one estimates a PSD moment matrix $\hat{\M}$ (e.g. representing a covariance $\hat{\S}$ or precision matrix $\hat{\mathbf{\Gamma}}$) using high dimensional data $\Y$,
     \item then estimates the embedding via maximum-a-posteriori (MAP) inference in a probabilistic model involving a Wishart distribution, resulting in the eigencomps.
 \end{enumerate}
 \item We show that 2-step MAP is equivalent to inference in ProbDR, and that \textbf{CMDS, LLE, LE, MVU, Isomap, diffusion maps \& kPCA} have ProbDR interpretations.
    \item We show examples reproducing embeddings of canonical implementations using PPLs, enabled by ProbDR, and show that ProbDR also enables reasoning about unseen data.
\end{itemize}

\begin{figure}
\centering
\vspace{-1cm}
\begin{tikzpicture}
\node[obs, draw=white](vi) {ProbDR};
\node[obs, right=of vi, draw=white](mle) {2-step MAP};
\node[latent, right=of mle, draw=white](s) {eigencomp. DR};
\node[latent, left=of vi, draw=white] (umap) {(t-)SNE, UMAP};

\edge{mle}{vi};
\edge{umap}{vi};
\edge{s}{mle};

\plate{ch2} {(umap)(vi)} {\textcolor{cyan}{\cref{section:probdr}, \cref{tbl:tsne-umap}}};
\plate{ch2} {(mle)(vi)} {\textcolor{orange}{\cref{sec:two-step-mle}}};
\plate{ch2} {(mle)(s)} {\textcolor{magenta}{\cref{tbl:two-step}}};

\end{tikzpicture}
\vspace{-0.5cm}
\caption{
  A figure summarizing our work. \textbf{Left:} UMAP and (t-)SNE have direct ProbDR interpretations. \textbf{Right:} the two-step MAP process, which describes DR methods that rely on eigencomponents, is equivalent to inference in ProbDR.
}
\label{fig:probdr_summary}
\end{figure}
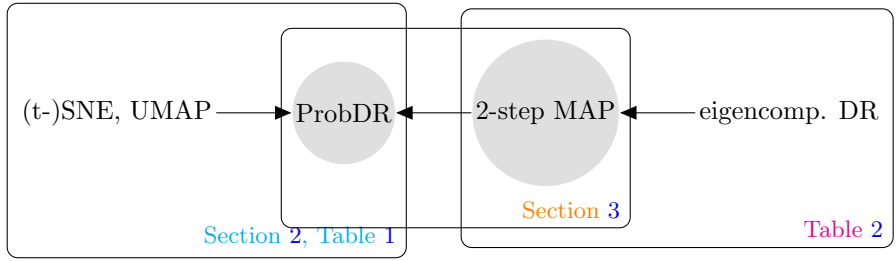

\section{The ProbDR Model Framework \& Inference}
\label{section:probdr}

ProbDR is a variational framework, illustrated in \cref{fig:probdr}, in which low dimensional latents $\X$ describe a moment or summary statistic of the data $\M$ (e.g. a covariance), using which a generative model on the data $\Y$ is constructed. The moment $\M$ has a variational distribution associated with it, that uses the data $\Y$ (as in VAEs and backconstrained/variational GPLVMs (\cite{bui-turner} based on \cite{gplvm-backconstraints})). \\

Inference in the framework is done by maximising a lower bound on the evidence (and the likelihood), the ELBO \citep{vi-intro, blei-vi}, w.r.t. $\X$ and model parameters,
\begin{align}
\label{eqn:probdr-obj}
\argmax_{\X, \theta} \mathbb{E}_{q(\M|\Y)}[\log p_{\theta}(\Y|\M)] - \text{KL}(q(\M|\Y)||p(\M | \X)).
\end{align}

A derivation is given in \cref{app:tsne-umap}. In our framework, the variational distribution $q$ does not have any parameters that are optimised, much like the case of denoising diffusion models \citep{diff-models}, and unlike traditional variational inference \citep{blei-vi}.

The objective above has two terms. The second term (the KL divergence) corresponds to the objective/cost function that is minimised in each of the respective DR algorithms. The first term corresponds to the generative model placed using the moment $\M$ on data $\Y$ and has no dependence on latents $\X$. Therefore, the generative model is a ``free'' addition, as its presence adds a constant to the objective w.r.t. the latents.

\subsection*{(t-)SNE \& UMAP}

(t-)SNE \& UMAP correspond to inference in the ProbDR framework, with different distributions placed on a random adjacency matrix $\M \equiv \A' \in \{0, 1\}^{n \times n}$, representing a data-data similarity matrix. (t-)SNE and UMAP define probabilities of data similarity $v_{ij}$ that depend on distances between the high-dim. datapoints $\Y_{i:}$ and $\Y_{j:}$, and $w_{ij}$ that depend on the distances between the low-dim. latents $\X_{i:}$ and $\X_{j:}$.

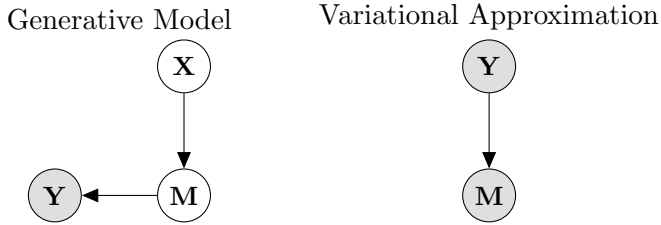
\begin{figure}
\centering
\begin{tikzpicture}
\node[latent] (A) {$\M$};
\node[latent, above=of A] (X) {$\X$};
\node[obs, left=of A] (Y) {$\Y$};

\edge{X}{A};
\edge{A}{Y};

\node[above] at (current bounding box.north) {Generative Model \newline};
\end{tikzpicture}
\qquad
\begin{tikzpicture}
\node[obs] (A) {$\M$};
\node[obs, above=of A] (Y) {$\Y$};

\edge{Y}{A}; 

\node[above] at (current bounding box.north) {Variational Approximation};
\end{tikzpicture}
\caption{A simplified graphical model that summarizes the ProbDR class of models. ELBOs corresponding to these models give rise to (t-)SNE, UMAP and other objectives.}
\label{fig:probdr}
\end{figure}

\begin{theorem}
    \label{thm:tsne-umap}
    (t-)SNE and UMAP objectives are recovered as the KL div. in \cref{eqn:probdr-obj} when model \& variational distributions on an adjacency matrix $\A'$ are set as in \cref{tbl:tsne-umap}.
\end{theorem}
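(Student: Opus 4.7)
The plan is to unpack the ELBO in \cref{eqn:probdr-obj} under the specific choices of variational and prior distributions in \cref{tbl:tsne-umap}, and then read off the (t-)SNE and UMAP objectives from the resulting KL term. Since $\M \equiv \A' \in \{0,1\}^{n\times n}$ is discrete and the table prescribes an observation model $p_\theta(\Y|\A')$ that does not reference $\X$, the expected log-likelihood $\mathbb{E}_{q(\A'|\Y)}[\log p_\theta(\Y|\A')]$ contributes only an $\X$-independent constant. Hence maximising the ELBO in $\X$ reduces to minimising $\text{KL}(q(\A'|\Y) \| p(\A'|\X))$, and it suffices to verify this KL equals the canonical cost for each algorithm.

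For UMAP, I would take $q(\A'|\Y) = \prod_{i\ne j} \text{Bern}(v_{ij})$ and $p(\A'|\X) = \prod_{i\ne j} \text{Bern}(w_{ij})$, where $v_{ij}$ and $w_{ij}$ are UMAP's high- and low-dim.\ edge probabilities. Because both factorise over pairs, the KL factorises as well, and a direct calculation of each Bernoulli-Bernoulli KL yields
\[
\sum_{i \ne j} \Bigl[ v_{ij}\log\tfrac{v_{ij}}{w_{ij}} + (1-v_{ij})\log\tfrac{1-v_{ij}}{1-w_{ij}} \Bigr],
\]
which is exactly the fuzzy cross-entropy that UMAP minimises.

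For (t-)SNE, the pair similarities are normalised into a distribution over (directed or symmetrised) edges, so I would take $q(\A'|\Y)$ and $p(\A'|\X)$ to be Categorical distributions with event probabilities $\{v_{ij}\}$ and $\{w_{ij}\}$ respectively, where $w_{ij}$ uses a Gaussian kernel for SNE and a Student-$t$ kernel for t-SNE. The categorical KL then immediately produces $\sum_{ij} v_{ij} \log(v_{ij}/w_{ij})$, matching the (t-)SNE objective up to constants in $\X$.

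The main obstacle, which is really bookkeeping rather than a deep difficulty, is making the normalisation and symmetrisation conventions in \cref{tbl:tsne-umap} match the published variants: SNE is row-conditional, symmetric SNE and t-SNE are jointly normalised across all pairs, and UMAP's $v_{ij}$ are per-edge Bernoulli parameters rather than a probability simplex. Once the table entries are parsed with these conventions in mind, and one confirms that the likelihood term is indeed free of $\X$, the three KL computations are mechanical and \cref{thm:tsne-umap} follows.
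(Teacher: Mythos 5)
Your proposal is correct and follows essentially the same route as the paper: the appendix likewise observes that the data-fit term $\mathbb{E}_{q(\M|\Y)}[\log p_\theta(\Y|\M)]$ is independent of $\X$, so the ELBO reduces to minimising $\text{KL}(q(\A'|\Y)\|p(\A'|\X))$, and then verifies term-by-term that the factorised Bernoulli KL gives $\mathcal{C}_{\text{UMAP}}$ (up to a factor of $2$ depending on whether one sums over $i<j$ or $i\neq j$) and the row-wise categorical KLs give $\mathcal{C}_{\text{SNE}}$ and $\mathcal{C}_{\text{t-SNE}}$, with the same bookkeeping caveats about normalisation and symmetrisation that you flag.
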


\begin{table}[h]
    \centering
    \begin{tabular}{l|cc|c}
    algo & $q(\A'|\Y)$ & $p(\A' | \X)$ & $\text{KL}(q||p)$ \\
    \hline
    UMAP & $\prod_{i \neq j}^n \text{Bernoulli}(\A'_{ij} | v_{ij}^U(\Y))$ & $\prod_{i \neq j}^n \text{Bernoulli}(\A'_{ij} | w_{ij}^U(\X_{i:, j:}))$ & $\mathcal{C}_{\text{UMAP}}$ \\
    SNE & $\prod_i^n \text{Categorical}(\A'_{i:} | v_{i:}^S(\Y))$ & $\prod_i^n \text{Categorical}(\A'_{i:} | w_{ij}^S(\X_{i:, j:}))$ & $\mathcal{C}_{\text{SNE}}$ \\
    t-SNE & $\text{Categorical}(\text{vec}(\A') | v_{::}^t(\Y))$ & $\text{Categorical}(\text{vec}(\A') | w_{::}^t(\X))$ & $\mathcal{C}_{\text{t-SNE}}$ \\
    \end{tabular}
    \caption{ProbDR assumptions that result in (t-)SNE \& UMAP objectives.}
    \label{tbl:tsne-umap}
\end{table}

Throughout this work, we assume an improper uniform prior on $\X$, $p(\X) \propto 1$. See \cref{app:tsne-umap} proofs, further detail and a discussion on why we flip notation w.r.t. the (t-)SNE papers (i.e. our objective appears as $\text{KL}(q\|p)$ instead of $\text{KL}(p\|q)$) although the computation is identical.

\subsubsection*{Generative Models for (t-)SNE \& UMAP}

Generative models in the ProbDR framework allow for inferences to be done at the data level (e.g. reconstructions, out of data predictions) using latent variables obtained through the various DR algorithms.
Any generative model $p(\Y|\A')$ that depends only on the adjacency matrix is a valid generative model for the (t-)SNE and UMAP cases, however, a natural choice is a Matérn-$\nu$ graph Gaussian process \citep{ggp}.

If $\A$ is a symmetric adjacency matrix (calculated as $\A_{ij} = \A'_{ij} \lor \A'_{ji}$), then a suitable graph Laplacian $\L$ can be derived (e.g. the ordinary $\L = \mathbf{D} - \A$, or the normalized $\L = \I - {\mathbf{D}^{\dagger}}^{1/2} \A {\mathbf{D}^{\dagger}}^{1/2}$) and a generative model can be specified as,
$$ \forall i: \Y_{:i} | \L \sim \mathcal N \left(\mathbf{0}, \begin{cases}
    [\L + \beta \I]^{-1} & \text{Matérn-1 case} \\
    \exp[-t\L] & \text{Matérn-}\infty \text{ case}
\end{cases} \right). $$
Note that the Matérn-1 case is the the Gaussian Markov random field covariance of \cite{neil-gmrf}. The normalised Laplacian is more useful in practice as graph statistics (e.g. degrees) implied by the variational and model distributions on $\A'$ are quite different (esp. in the UMAP case).
Due to the additional dependence on $\L$ as opposed to the GPs in \cref{eqn:gp},
\begin{enumerate}
    \item these generative models lack of marginal consistency - i.e. $\Y_{i:}$ indexed by $\X_{i:}$ can't be described by a GP as every new data point changes the full covariance of the data;
    \item the generative model has non-uniform marginal variances.
\end{enumerate}
See \cref{app:tsne-umap} for discussions on adjacency matrices, marginal variances and properties of $\L$ that make it a suitable precision. \cref{app:expts} shows that, despite these limitations, prior samples using graph GPs indexed using $\X$ resemble samples from traditional GPs.

\section{Two-step MAP \& the Wishart model class}
\label{sec:two-step-mle}

Next, we focus on the 2-step MAP class of algorithms and show equivalence to ProbDR. Many DR algorithms estimate an embedding as a two step process (exemplified in \cref{tbl:two-step}),
\begin{enumerate}
    \item Estimate a PSD matrix $\hat{\M}$, which we interpret as a moment (a covariance $\hat{\S}$ or precision $\hat{\mathbf{\Gamma}}$). This can be a function of the data, e.g. PCA, where $\hat{\M}(\Y) \equiv \hat{\S}(\Y) = \Y\Y^T/d$ or as a result of a likelihood maximisation, i.e. $\hat{\M}(\Y) = \argmax_{\M} \mathcal{L}(\Y; \M)$ as in LLE.
    \item Set the embedding $\X$ to $\q$ scaled eigenvectors of $\hat{\M}$ corresponding to the largest or lowest eigenvalues (referred to as major \& minor eigenvectors respectively).
\end{enumerate}
To draw a connection to ProbDR, firstly, we show that step 2 is MAP estimation for $\X$,
\begin{align}
    \vspace{-0.5cm}
    \hat{\X} &= \argmax_{\X} \log p(\X | \hat{\M} * d) \overset{\text{bayes}}{=} \argmax_{\X} \log p(\hat{\M} * d| g(\X)),
    \vspace{-0.5cm}
\label{eqn:wish-mle}
\end{align}
setting the model for $p(\hat{\M} * d | \X)$ to be a Wishart distribution as per \cref{thm:two-step-mle}, $p(\X) \propto 1$ and where $g(.)$ computes the mean parameter of the Wishart.
\begin{theorem}[Step 2 is MAP estimation] The MAP estimate of $\X$, with an improper uniform prior over $\X$, given the models below occurs at the $\q$ principal/major and minor scaled eigenvectors of $\hat{\S}$ \& $\hat{\mathbf{\Gamma}}$ respectively,
\begin{align}
    \label{eqn:two-step-mle}
    \hat{\mathbf{\S}} * d | \mathbf{X} \sim \mathcal{W}\left(\X \X^T + \sigma^2 \I_n, d\right) \quad &\Rightarrow \quad \hat{\X}_{\text{MAP}} = \mathbf{U}_{\q \text{ maj}} (\mathbf{\Lambda}_{\q \text{ maj}} - \hat \sigma^2 \I_{\q})^{1/2} \mathbf{R}^T \\
    \hat{\mathbf{\Gamma}} * d | \mathbf{X} \sim \mathcal{W}\left((\X \X^T + \beta \I_n)^{-1}, d\right) \quad &\Rightarrow \quad \hat{\X}_{\text{MAP}} = \mathbf{U}_{\q \text{ min}} (\mathbf{\Lambda}^{-1}_{\q \text{ min}} - \hat \beta \I_{\q})^{1/2} \mathbf{R}^T \nonumber
\end{align}
where $\U_\q, \mathbf{\Lambda}_\q$ are matrices of $\q-$eigenvectors and corresponding eigenvalues and $\mathbf{R}$ is an arbitrary rotation matrix. Proved in \cref{app:two-step}. $\hat{\S} = \Y\Y^T/d$ recovers PCA; see \cref{thm:pca}.
\label{thm:two-step-mle}
\end{theorem}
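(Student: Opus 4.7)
The plan is to treat both cases as maximizations of the corresponding Wishart log-density in $\X$, exploit matrix structure to reduce to eigenproblems, and then identify which selection of eigenvectors attains the global maximum. Since $p(\X) \propto 1$, MAP coincides with MLE, so throughout I would drop $\X$-independent constants from the log-density.

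For the covariance case, using the Wishart pdf, dropping $\X$-independent terms reduces the objective to
\begin{align*}
\log p(\hat{\S} d \mid \X) \propto -\tfrac{d}{2}\log\bigl|\X\X^T + \sigma^2 \I_n\bigr| - \tfrac{d}{2}\operatorname{tr}\!\left[(\X\X^T + \sigma^2 \I_n)^{-1}\hat{\S}\right].
\end{align*}
This is structurally identical to the PPCA marginal log-likelihood of Tipping and Bishop, with $\X$ playing the role of the factor-loading matrix and $\hat{\S}$ that of the sample covariance. I would directly cite the PPCA stationary-point and Hessian analysis to conclude the global maximum has the form $\U_{\q\text{ maj}}(\mathbf{\Lambda}_{\q\text{ maj}} - \hat{\sigma}^2 \I_{\q})^{1/2}\mathbf{R}^T$, with rotation freedom arising from the invariance of $\X\X^T$ under $\X \mapsto \X \mathbf{R}^T$.

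For the precision case, the analogous simplification yields
\begin{align*}
\log p(\hat{\mathbf{\Gamma}} d \mid \X) \propto \tfrac{d}{2}\log\bigl|\X\X^T + \beta \I_n\bigr| - \tfrac{d}{2}\operatorname{tr}(\X^T \hat{\mathbf{\Gamma}} \X),
\end{align*}
and matrix calculus produces the stationarity condition $(\X\X^T + \beta \I_n)^{-1}\X = \hat{\mathbf{\Gamma}}\X$. Writing the thin SVD $\X = \U S \mathbf{R}^T$ and using $(\U S^2 \U^T + \beta \I_n)^{-1} \U = \U(S^2 + \beta \I_\q)^{-1}$, this reduces to $\hat{\mathbf{\Gamma}}\U = \U(S^2 + \beta\I_\q)^{-1}$, so the columns of $\U$ must be eigenvectors of $\hat{\mathbf{\Gamma}}$ and the singular values must satisfy $S^2 = \mathbf{\Lambda}^{-1} - \beta\I_\q$ (which forces $\lambda_i < 1/\beta$).

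To select among stationary points I would substitute back into the objective: the eigenvalue-dependent part collapses to $\sum_i (\beta\lambda_i - \log\lambda_i)$, and the scalar map $h(\lambda) = \beta\lambda - \log\lambda$ is strictly decreasing on $(0, 1/\beta)$, so the global maximum picks the $\q$ smallest (minor) eigenvalues and delivers the stated form. The main obstacle will be rigorously ruling out the remaining stationary eigen-selections as saddle rather than maximum: in the covariance case this is handled by the Tipping-Bishop Hessian argument, while in the precision case the same monotonicity property of $h$, combined with a Hessian-along-perturbation computation at a non-minor selection, should exhibit a descent direction and close the gap.
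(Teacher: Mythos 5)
Your proposal is correct; for the covariance case it is exactly the paper's route (reduce the Wishart log-density to $-\tfrac{d}{2}\text{tr}(\C^{-1}\tilde{\S})-\tfrac{d}{2}\log|\C|$ and invoke the Tipping--Bishop/Lawrence PPCA stationary-point analysis), but for the precision case you take a genuinely more self-contained path. The paper never derives the stationarity condition there: it observes that $\tfrac{d}{2}\log|\X\X^T+\beta\I_n|-\tfrac{d}{2}\text{tr}\big((\X\X^T+\beta\I_n)\tilde{\mathbf{\Gamma}}\big)$ is, after a pure relabelling of symbols, the probabilistic minor components analysis objective of its cited MCA reference, and imports that result wholesale. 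Your inline argument --- the condition $(\X\X^T+\beta\I_n)^{-1}\X=\hat{\mathbf{\Gamma}}\X$, the push-through identity on the thin SVD, and monotonicity of $h(\lambda)=\beta\lambda-\log\lambda$ on $(0,1/\beta)$ --- buys a proof that does not lean on an external theorem, and your worry about a Hessian classification is actually unnecessary: comparing objective values across \emph{all} stationary points (including those with some zero columns, whose per-eigenvalue gain is $h(\lambda_i)-h(1/\beta)>0$) already identifies the minor selection as the global maximiser, provided you add that the objective is coercive so the supremum is attained at a stationary point. That coercivity needs $\hat{\mathbf{\Gamma}}\succ 0$; for a singular precision such as an unregularised graph Laplacian the log-determinant term diverges along the null direction, a degenerate case the paper sidesteps by citation and that you should exclude explicitly. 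Two further points you gloss over and the paper handles: when $d<n$ the matrix $\Y\Y^T$ is rank-deficient, so one must appeal to the singular Wishart density to justify "using the Wishart pdf" (the trace-plus-log-determinant form survives, which is the content of the paper's preliminary lemma); and the quantities $\hat{\sigma}^2$ and $\hat{\beta}$ appearing in the stated maximisers are themselves obtained by jointly optimising the noise parameter, which your argument treats as fixed.
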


Secondly, to establish a connection to ProbDR, we state (and prove in \ref{app:probdr-map-equiv}) that the 2-step process of estimating $\hat{\M}$ and performing MAP as in \cref{eqn:wish-mle} is equivalent to ProbDR.
\begin{theorem}
    Finding $\argmin_{\X} \text{KL}( \; q(\M | \hat \M(\Y)) \; \| \; p(\M | \X) \; )$, i.e. the ProbDR KL div. of \cref{eqn:probdr-obj}, is equivalent to 2-step MAP assuming,
    $$ p(\M | \X) = \mathcal{W}(\M | g(\X), d) \text{ and } q(\M|\hat \M) = \mathcal{W}(\M| \hat{\M}(\Y), d). $$
\end{theorem}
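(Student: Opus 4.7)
The plan is to show that the KL objective and the 2-step MAP objective, viewed as functions of $\X$, reduce to the same quantity up to additive $\X$-independent constants. Since $p(\M\mid\X)$ and $q(\M\mid\hat\M)$ are Wisharts with a common $d$ degrees of freedom, both expressions admit clean closed forms in terms of the scale matrices $g(\X)$ and $\hat\M$, and the reduction is a one-line computation on each side.

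First I would invoke the standard KL formula between two same-dof Wisharts (using the scale parameterisation so that $\mathcal{W}(V,d)$ has mean $dV$):
$$ \text{KL}\bigl(\mathcal{W}(V_q,d)\,\|\,\mathcal{W}(V_p,d)\bigr) \;=\; \tfrac{d}{2}\bigl[\text{tr}(V_p^{-1}V_q) + \log|V_p| - \log|V_q|\bigr] + \text{const}. $$
This drops out of $\mathbb{E}_q[\log q - \log p]$ once one uses $\mathbb{E}_q[\M] = d V_q$ on the cross term $\text{tr}(V_p^{-1}\M)$. Plugging in $V_q = \hat\M(\Y)$ and $V_p = g(\X)$, and discarding $\X$-independent pieces, one obtains
$$ \argmin_{\X}\text{KL}(q\|p) \;=\; \argmin_{\X}\bigl[\text{tr}\bigl(g(\X)^{-1}\hat\M\bigr) + \log|g(\X)|\bigr]. $$

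For the 2-step MAP side, I would write out the Wishart log-density of $\hat\M\cdot d\mid\X \sim \mathcal{W}(g(\X),d)$ from \cref{thm:two-step-mle}. Dropping terms that do not involve $\X$,
$$ \log p(\hat\M\cdot d\mid\X) \;=\; -\tfrac{d}{2}\bigl[\text{tr}\bigl(g(\X)^{-1}\hat\M\bigr) + \log|g(\X)|\bigr] + C(\hat\M,d), $$
so under the improper uniform prior $p(\X)\propto 1$ the MAP problem becomes the \emph{same} minimisation. Hence the two argmins agree, which is the claimed equivalence; specialising to the two cases of \cref{thm:two-step-mle} then recovers the individual 2-step DR algorithms.

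The main obstacle is really just parameterisation bookkeeping rather than hard analysis: one has to identify the random matrix $\M$ appearing in the KL with $\hat\M\cdot d$ in the MAP statement, so that the Wishart scale matrices $g(\X)$ line up on both sides, and to be consistent that $g(\X)$ is being used as the scale (its mean under the Wishart is then $d\,g(\X)$). Once that correspondence is fixed, both objectives collapse to the same data-fit-plus-log-determinant penalty on $g(\X)$ and the equivalence follows immediately.
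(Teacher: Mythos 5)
Your proposal is correct and follows essentially the same route as the paper's proof: compute the Wishart log-likelihood at $\hat\M(\Y)\cdot d$ on one side, apply the same-degrees-of-freedom Wishart KL formula on the other, and observe that both collapse to $\tfrac{d}{2}\bigl[\text{tr}(g(\X)^{-1}\hat\M) + \log|g(\X)|\bigr]$ up to $\X$-independent constants. The only difference is cosmetic — you derive the KL formula via $\mathbb{E}_q[\M]=dV_q$ rather than citing it, and you are slightly more explicit about the $\M \leftrightarrow \hat\M\cdot d$ scaling bookkeeping.
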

It's interesting to note that discarding the variational assumption and marginalising the covariance/precision of \cref{thm:two-step-mle} using certain multivariate normal generative models leads to a GP with a linear kernel (as in PCA; \cref{thm:pca_marginal}), showing consistency of ProbDR.

\begin{table}
    \centering
    \vspace{-0.9cm}
    \begin{adjustwidth}{-1cm}{}
    \begin{tabular}{p{17cm}}
    \toprule
    \textbf{CMDS, Isomap, kernel PCA \& MVU}: \\
    \textbf{Step 1}: Each algorithm first calculates a matrix $\mathbf{K}$. CMDS and Isomap set $\mathbf{K} = -0.5 * \text{(a distance matrix)}$. This is computed outright using a metric in the case of CMDS. In the case of Isomap, nearest neighbours are identified for every datapoint, and then the distance matrix is set to a matrix of shortest distances on the neighbour graph. In the case of kPCA, it is computed using a kernel evaluated on data point pairs, $\mathbf{K}_{ij} = k(\Y_{i:}, \Y_{j:})$. MVU estimates $\mathbf{K}$ by maximising $\text{tr}(\mathbf{K})$ under PSD, centering and local isometry constraints. Then, in all methods, $\mathbf{K}$ is centered using a centering matrix $\mathbf{H}$, \vspace{-0.3cm}
    $$\hat{\S}(\Y) = \mathbf{H} \mathbf{K} \mathbf{H}.\vspace{-0.3cm}$$
    The centered matrix $\hat{\S}$ has an interpretation as a similarity matrix. Although in the latter cases $\hat{\S}$ is PSD, it isn't generally (e.g. with non-Euclidean distances in CMDS). In CMDS and for the purposes of ProbDR, an approximated PSD matrix $\hat{\S}^+$ is used. This is found by obtaining the eigendecomposition of $\hat{\S}$ and setting non positive eigenvalues to zero. References: \cite{neil-gmrf, isomap, mds, mvu}. \\
    \textbf{Step 2}: The embedding is found using major scaled eigencomps of $\hat{\S}$, which are usually also the major scaled eigencomps of $\hat{\S}^+$, as generally, only the minor eigenvectors of $\hat{\S}$ are removed.
    \\
    \hline
    \textbf{Laplacian Eigenmaps \& Spectral Embeddings}: \\
    \textbf{Step 1}: LE constructs a normalized, weighted graph Laplacian, encoding data similarity. E.g.,\vspace{-0.25cm} $$ \hat{\mathbf{\Gamma}}(\Y)_{ij} = \Tilde{\L}_{ij} \text{ with } \A_{ij} =  \mathcal I(\|\Y_{i:} - \Y_{j:}\| < \epsilon)\vspace{-0.3cm}$$
    and a graph Laplacian, by assumption, is present for the embedding stage of spectral clustering. \\
    \textbf{Step 2}: The embedding is set to $\q$-minor eigenvectors of $\Tilde{\L}$ but after discarding the first minor (the constant) eigenvector. It's known that setting,\vspace{-0.25cm}
    $$\hat{\S}(\Y) = \mathbf{H} (\Tilde{\L} + \gamma \I)^{-1} \mathbf{H},\vspace{-0.3cm}$$
    and obtaining $\q$ major eigenvectors results in the disposal of the constant eigenvector. Hence, either case ($\hat{\mathbf{\Gamma}}$ or $\hat{\S}$) is a valid PSD matrix in ProbDR, although the ProbDR solution will differ up to scaling (and potentially the inclusion of the constant eigenvector if $\hat{\mathbf{\Gamma}}$ is chosen). References: \cite{neil-gmrf, lap-eigenmaps, graph-tutorial}. \\
    \hline
    \textbf{Locally Linear Embedding}: \\
    \textbf{Step 1}: The (generalised) LLE algorithm can be interpreted to first perform inference on ``reconstruction weights'' $\mathbf{W}$ via pseudolikelihood optimisation given the model,\vspace{-0.25cm}
    $$ \forall i: \Y_{i:} | \Y_{-i} \sim \mathcal N \Big( -{\mathbf{W}^{-1}_{ii}} \sum_{j \in \mathcal N(i)} \mathbf{W}_{ji}\Y_{j:}, {\mathbf{W}_{ii}^{-2}}\Big); \text{ with } \hat{\mathbf{\Gamma}}(\Y) = \L = \mathbf{W} \mathbf{W}^T, \vspace{-0.25cm}$$
    $\mathbf{W}_{ii} = -\sum_{j\in \mathcal N(i)} \mathbf{W}_{ji} = 1$ and $\forall j \not \in \mathcal N(i): \mathbf{W}_{ji} = 0$. Reference: \cite{neil-gmrf}. \\
    \textbf{Step 2}: This is exactly as in the case of Laplacian Eigenmaps. \\
    \hline
    \textbf{Diffusion maps}: \\
    \textbf{Step 1}: The diffusion maps algorithm computes a ``transition matrix'' $\mathbf{P}$ (a kernel matrix evaluated at $\Y$ and normalized). Under specific choices of normalization, the matrix computed estimates a heat kernel (a Matérn-$\infty$ graph covariance). Other methodologies use similar matrices (e.g. SR matrices, whose eigenvectors resemble spatial eigenfunctions). $\hat{\S}$ may represent such matrices if they're PSD (approximately; after centering if needed). Ref.: \cite{sr-eigenfuns, diff-maps}. \\
    \textbf{Step 2}: Major scaled eigencomps are obtained as the embedding.\\ \bottomrule
    \end{tabular}
    \end{adjustwidth}
    \caption{Algorithms that can be interpreted as 2-step MAP processes. Step 1 shows computation of $\hat{\S}$ or $\mathbf{\hat{\Gamma}}$ for \cref{eqn:two-step-mle}. Step 2 highlights that all methods eigendecompose the appropriate matrix, which correspond to MAP inference (as in \cref{thm:two-step-mle}, up to scaling); this table also highlights important nuances. Scaled eigencomps refer to $\q$ major or minor eigenvalue scaled eigenvectors of $\hat{\S}$ or $\mathbf{\hat{\Gamma}}$.}
    \label{tbl:two-step}
\end{table}

\begin{figure}
    \centering
    \includegraphics[width=0.75\columnwidth]{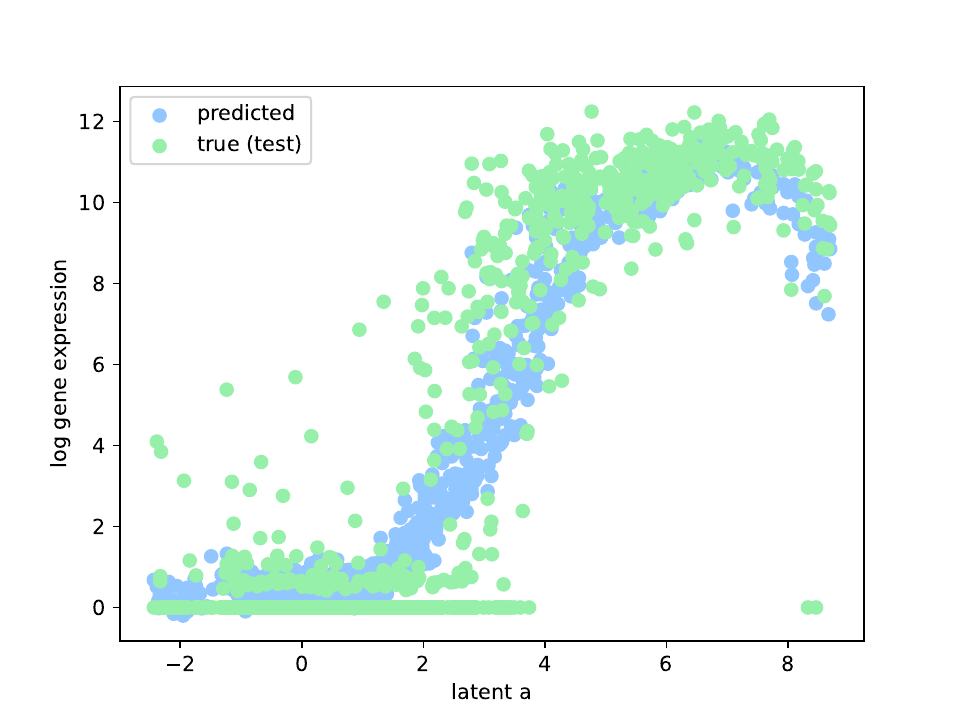}
    \caption{\small The figure shows predictions (blue) for unseen gene expression data given an embedding, generated with a graph GP, fit using the ProbDR-UMAP framework. The fit achieves a better predictive test RMSE than a VAE.}
    \label{fig:expt_b}
\end{figure}

\begin{figure}
    \centering
    \includegraphics[width=0.75\columnwidth]{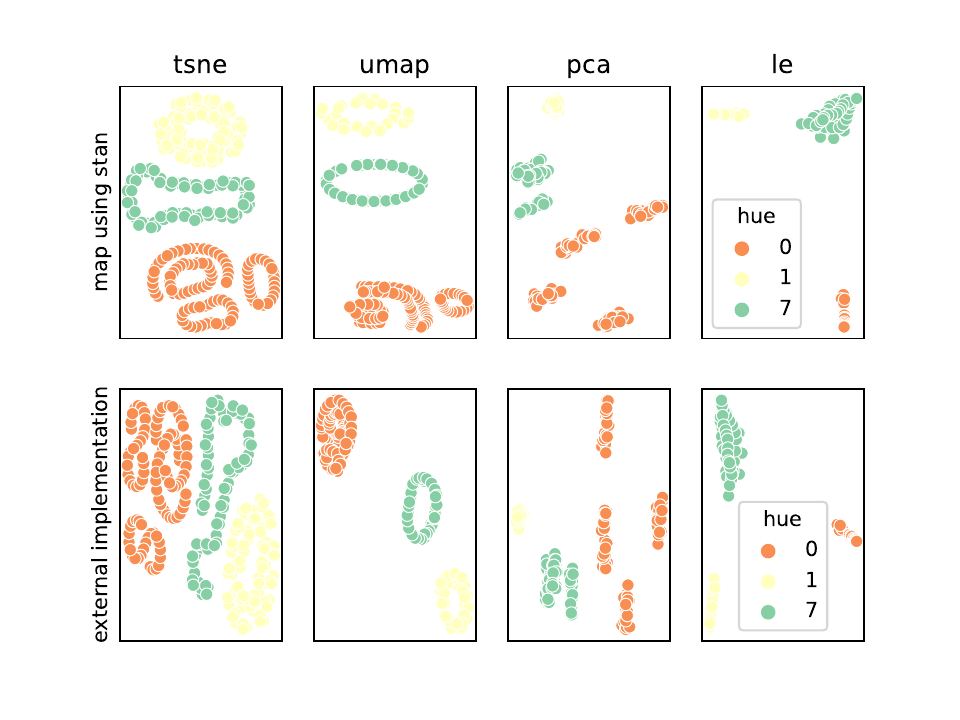}
    \caption{\small The figure shows embeddings of a few rotated MNIST figures recovered through automated MAP estimation using the PPL Stan with ProbDR assumptions, compared with popular community implementations.}
    \label{fig:expt_a}
\end{figure}

\section{Conclusion}

We introduce the ProbDR framework that provides a unified perspective on a large number of DR algorithms. As an immediate consequence, PPLs can be used to perform DR via automated inference (e.g. \cref{fig:expt_a}), and our framework allows these methods to be used with generative models (e.g. \cref{fig:expt_b}; further detail is provided in \cref{app:expts}). We show that our framework is internally consistent (see \cref{app:sub:consistency}), and that marginalising the intermediary moment and discarding the variational constraint leads to GP behaviour in the generative models (\cref{thm:pca_marginal}). Future work will aim to study the characteristics of constraints set by the various DR algorithms' corresponding variational approximations. We will also explore whether these constraints can be used to guide kernel choice for defining GPs on manifolds \citep{manifold-gp}, for instance hyperbolic kernels representing tree structures \citep{hyperbolic} and hyperspherical kernels representing cyclicality.



\newpage
\vskip 0.2in
\bibliography{refs}

\newpage
\appendix

\tableofcontents

\newpage

\section{Proof of (t-)SNE and UMAP results and remarks}
\label{app:tsne-umap}

\subsection{Derivation of the main objective}

We first show that a derivation of the ELBO from first principles in the ProbDR framework, and then show that the individual algorithms ((t-)SNE and UMAP) minimize the KL divergence found in our ELBO. The objective, an evidence lower bound, is derived as follows,
\begin{align*}
\text{KL}(q(\M | \Y)||p_{\theta}(\M | \X, \Y)) &= \mathbb{E}_{q(\M|\Y)} \left[ \dfrac{\log q(\M|\Y)}{\log p_{\theta}(\M|\X, \Y)} \right] \\
&= \mathbb{E}_{q(\M|\Y)} [ \log q(\M|\Y) ] - \mathbb{E}_{q(\M|\Y)} [ \log p_{\theta}(\Y|\M) p(\M|\X) ] + \log p(\Y) \\
&=  \mathbb{E}_{q(\M|\Y)} \left[ \dfrac{\log q(\M|\Y)}{\log p(\M|\X)} \right] - \mathbb{E}_{q(\M|\Y)}[\log p_{\theta}(\Y|\M)] + \log p(\Y) \\
&=  \text{KL}(q(\M|\Y) \| p(\M|\X)) - \mathbb{E}_{q(\M|\Y)}[\log p_{\theta}(\Y|\M)] + \log p(\Y) \\
&= \log p(\Y) - \text{ELBO}(\X, \theta).
\end{align*}
As $\log p(\Y)$ is constant,
$$\argmin_{\theta, \X} \text{KL}(q(\M | \Y)||p_{\theta}(\M |\X, \Y)) = \argmax_{\theta, \X} \text{ELBO}(\X, \theta).$$
In the derivation above, we assume an improper uniform prior over $\X$, i.e. $p(\X) \propto 1$. Our objective may also be interpreted as a regularised Bayesian inference \citep{regbayes} objective.

Optimising the ELBO w.r.t. $\X$ leads to the minimisation problem becoming,
\begin{equation}
    \label{eqn:prob-dr-kl}
    \argmax_{\theta, \X} \text{ELBO}(\X, \theta) = \argmin_{\X} \text{KL}(q(\M | \Y)||p(\M |\X)),
\end{equation}
as the data fit term of the ELBO (the first term in \cref{eqn:probdr-obj}) is independent of $\X$ and the $\text{KL}$ above is independent of $\theta$. Below, we show how this KL divergence of \cref{eqn:prob-dr-kl} arises in (t-)SNE and UMAP.

\subsection{SNE}

The stochastic neighbour embedding (SNE) algorithm was introduced by \cite{sne} as an approach for dimensionality reduction. The approach was to minimise a Kullback Leibler (KL) divergence between a set of probabilities $v^{S}_{ij}$ (corresponding to two data points $i$ and $j$ being neighbours) generated by a discrete distribution in a data space $\Y$ and a discrete distribution with probabilities $w_{ij}^{S}$ generated by using a lower dimensional latent embedding $\X$. These probabilities are defined as,
\begin{align*}
    v_{ij}^{S} &= \frac{\exp(-\|\Y_{i:} - \Y_{j:}\|^2/\sigma_i^2)}{\sum_{k\neq i} \exp(-\|\Y_{i:} - \Y_{k:}\|^2/\sigma_i^2)}, \\
    w_{ij}^{S} &= \frac{\exp(-\|\X_{i:} - \X_{j:}\|^2)}{\sum_{k\neq i} \exp(-\|\X_{i:} - \X_{k:}\|^2)},
\end{align*}
where $\mathbf{M}_{i:}$ and $\mathbf{M}_{:j}$ denote row $i$ and column $j$ of a matrix $\mathbf{M}$  respectively and $\sigma_i$ denotes a hyperparameter. Probabilities $w_{ij}^{S}$ are made close to probabilities $v_{ij}^{S}$ by minimizing the objective below with respect to $\X$,
\begin{align*}
    \mathcal C_{SNE} = \sum_{i} \sum_{j\neq i} v_{ij}^{S} \log  \frac{v_{ij}^{S}}{w_{ij}^{S}}.
\end{align*}
The idea is that if probabilities defined in latent space are similar in terms of the KL divergence to probabilities defined in data space, then the latent dimensions of $\X$ are capturing some salient aspect of the data $\Y$. In all three algorithms, probabilities of association relating to the same point, $v_{ii} \text{ and } w_{ii}$, are set to zero. \\

\begin{proof} of \cref{thm:tsne-umap}, SNE case. Now that we've introduced the SNE probabilities, we prove how it fits into the ProbDR framework. \\
In the case of SNE, we assume in the ProbDR framework,
\begin{align*}
    q(\A'|\Y) &= \prod_i^n \text{Categorical}(\A'_{i:} ; \Y) = \prod_i^n \prod_{j \neq i}^n [v_{ij}^S]^{\A_{ij}} \text{ and} \\
    p(\A' | \X) &= \prod_i^n \text{Categorical}(\A'_{i:} | \X) = \prod_i^n \prod_{j \neq i}^n [w_{ij}^S]^{\A_{ij}}.
\end{align*}
This leads to the KL of \cref{eqn:prob-dr-kl},
\begin{align*}
    \text{KL}(q(\A'|\Y)||p(\A' | \X)) &= \sum_{i} \text{KL}(q(\A'_{i:})||p(\A'_{i:} | \X)) \\
    &= \sum_{i} \sum_{j\neq i} v_{ij}^{S} \log  \frac{v_{ij}^{S}}{w_{ij}^{S}} = C_{SNE}.
\end{align*}
\end{proof}

\subsection{Remark on the direction of KL and notation}

Note that our notation (and only the notation) is flipped w.r.t. the (t-)SNE papers, i.e. we define the objective as $\text{KL}(q\|p)$ rather than $\text{KL}(p\|q)$, although the computation of the objective remains the same.

To see why, note that the objective of \cite{sne} looks like,
$$\text{KL}(\text{probabilities involving data} \| \text{probabilities involving latents}).$$
Noting that in typical variational models (such as VAEs and variational GPLVMs), the variational distributions are a function of data, and the model distributions are a function of latents (or parameters associated with the generative model), we propose that it is more natural to set the data based probabilities to $q$ and write the objective as $\text{KL}(q\|p)$ as we do here. Noting this was one of the main inspirations for this project, along with the observation that many circularly-specified modelling methodologies can be written as variational inference algorithms. Note further that we denote high dimensional observed data by $\Y$ and low dimensional embeddings by $\X$, taking inspiration from how regression models are typically specified, whereas many older works such as (t-)SNE use reversed notation.

\subsection{t-SNE}

The t-SNE algorithm was introduced by \cite{tsne} to improve optimisation and visualization. In the t-SNE algorithm, probabilities $v_{ij}^{t}$ and $w_{ij}^{t}$ are defined as,
\begin{align*}
    v_{ij}^{t} &= (v_{ij}^{S} + v_{ji}^{S})/2n, \\
    w_{ij}^{t} &= \frac{(1 + \|\X_{i:} - \X_{j:}\|^2)^{-1}}{\sum_{k\neq l} (1 + \|\X_{k:} - \X_{l:}\|^2)^{-1}},
\end{align*}
which are then matched by minimizing the cost function below w.r.t $\X$,
\begin{align*}
    \mathcal C_{t-SNE} = \sum_{i\neq j} v_{ij}^{t} \log  \frac{v_{ij}^{t}}{w_{ij}^{t}}.
\end{align*} \\

Note that the normalization here, as opposed to SNE, is over the entire set of probabilities.
\begin{proof} of \cref{thm:tsne-umap}, t-SNE case. Now that we've introduced the t-SNE probabilities, we prove how it fits into the ProbDR framework. Note that both sets of probabilities in t-SNE sum up to one. In the case of t-SNE, we assume in the ProbDR framework,
\begin{align*}
    q(\A'|\Y) &= \text{Categorical}(\text{vec}(\A') | \Y) = \prod_{i \neq j}^n [v_{ij}^{t}]^{\A_{ij}} \text{ and} \\
    p(\A' | \X) &= \text{Categorical}(\text{vec}(\A') | \X) = \prod_{i \neq j}^n [w_{ij}^{t}]^{\A_{ij}}.
\end{align*}
Therefore the KL of \cref{eqn:prob-dr-kl},
\begin{align*}
    \text{KL}(q(\A'|\Y)||p(\A' | \X)) &= \sum_{i\neq j} v_{ij}^{t} \log  \frac{v_{ij}^{t}}{w_{ij}^{t}} = C_{t-SNE}.
\end{align*}
\end{proof}

\subsection{UMAP}

The UMAP algorithm \citep{umap} has proven to be a popular choice  in computational biology for visualizing single-cell RNA-seq data due to decreased runtimes and a greater ability of recovering cell clusters as compared with t-SNE \citep{umap-nature}. The algorithm defines probabilities $v_{ij}^{U}$ and $w_{ij}^{U}$ as
\begin{align*}
    v_{i|j}^{U} &= \exp((\rho_i-\text{distance}(\Y_{i:}, \Y_{j:})) / \sigma_i), \\
    v_{ij}^{U} &= v_{i|j} + v_{j|i} - v_{i|j} * v_{j|i}, \\
    w_{ij}^{U} &= (1 + a\|\X_{i:} - \X_{j:}\|^{2b})^{-1},
\end{align*}
where $\rho_i$ denotes the distance to the nearest neighbour of data point $i$. These are matched by optimizing the cost function below w.r.t $\X$:
\begin{align*}
    \mathcal C_{UMAP} = \sum_{i\neq j} v_{ij}^{U} \log  \frac{v_{ij}^{U}}{w_{ij}^{U}} + (1-v_{ij}^{U}) \log  \frac{1 - v_{ij}^{U}}{1 - w_{ij}^{U}}.
\end{align*} \\

\begin{proof} of \cref{thm:tsne-umap}, UMAP case. Now that we've introduced the UMAP probabilities, we prove how it fits into the ProbDR framework. We use UMAP notation for defining probabilities (i.e. $v$ and $w$) throughout this paper. \\
In the case of UMAP, we assume in the ProbDR framework,
\begin{align*}
    q(\A'|\Y) &= \prod_{i<j}^n \text{Bernoulli}(\A'_{ij} ; \Y) = \prod_i^n \prod_{j<i}^n [v_{ij}^U]^{\A'_{ij}}[1-v_{ij}^U]^{1-\A'_{ij}} \text{ and} \\
    p(\A' | \X) &= \prod_{i<j}^n \text{Bernoulli}(\A'_{ij} | \X) = \prod_i^n \prod_{j<i}^n [w_{ij}^U]^{\A'_{ij}}[1-w_{ij}^U]^{1-\A'_{ij}}.
\end{align*}
Hence,
\begin{align*}
    \text{KL}(q(\A'|\Y)||p(\A' | \X)) &= \sum_{i} \sum_{j<i} \text{KL}(q(\A'_{ij})||p(\A'_{ij} | \X)) \\
    &= \sum_{i} \sum_{j<i} v_{ij}^{U} \log  \frac{v_{ij}^{U}}{w_{ij}^{U}} + (1-v_{ij}^{U}) \log  \frac{1 - v_{ij}^{U}}{1 - w_{ij}^{U}} = C_{UMAP}/2.
\end{align*}
\end{proof}

\subsection{Remarks on adjacencies, Laplacians and marginal variances}

\textbf{A note on adjacency matrices:} In models corresponding to SNE and t-SNE, the adjacency matrix we define ($\A'$) can be thought of as an adjacency matrix on a directed graph; the use of categorical distributions results in $\A'$ being asymmetric. In the UMAP case, the matrix $\A'$ also represents a directed graph as the lower triangle of the adjacency matrix is independent of the upper triangle (hence, samples of the matrix are likely to be asymmetric). However, by changing the range of the sum of the UMAP cost function to $i < j$ rather than summing over $i \neq j$ (which simply results in a factor of a half appearing before the cost function), one may construct a lower triangular adjacency matrix, describing a directed acyclical graph, which may be made symmetric as needed (as in \cref{section:probdr}). Adjacency matrices representing DAGs or more generally directed graphs can be used as part of other generative models described in \cref{app:alt-gen-models}. \\

\textbf{A note on marginal variances:} A covariance $\mathbf{C}$ can be normalized as,
$$\text{diag}(\mathbf{C})^{-1/2} \mathbf{C} \text{diag}(\mathbf{C})^{-1/2},$$
to make it a correlation matrix with uniform marginal variances. This process can be done approximately, efficiently and differentiably by using the eigendecomposition of the covariance or the precision.

\textbf{A note on the Laplacian as a precision:} The Laplacian as part of a precision matrix is meaningful as it is positive definite, describes non-negative partial correlations between data points (and where the partial correlation is zero, data points are conditionally independent), and is typically sparse, describing a sparsely-connected Gaussian field on the data.

\section{Two-Step MLE Proofs \& Wishart Results}
\label{app:two-step}

\subsection{Remark on notation used for Wishart matrices}

Many of our statements involving Wishart distributed random matrices are denoted as,
$$ \T \sim \mathcal{W}(\hat{\M}, d). $$
The random matrix without a hat or an overset tilda $\T$ represents a matrix that is scaled in some way, and matrices with a hat or overset tilda represent unscaled quantities. In this example, note that $\mathbb E(\T) = \hat{\M} * d$. Therefore, sample (estimated) covariances in our work are denoted as $\hat{\S}$, as they are typically calculated as $\Y\Y^T/d$, and hence are an unscaled quantity. In this example, we would denote by $\S$ the unscaled random matrix $\Y\Y^T$, which (assuming that the columns of $\Y$ are independent multivariate normal samples) by definition is Wishart distributed, and is scaled by the degrees of freedom in expectation.

\subsection{Summary of main result}

The main result stated in \cref{thm:two-step-mle} states that the MAP estimation for $\X$ given the models (with an improper uniform prior over $\X$) occurs at the $\q$ major and minor scaled eigenvectors respectively,
\begin{align*}
    \hat{\mathbf{\S}} * d | \mathbf{X} \sim \mathcal{W}\left(\X \X^T + \sigma^2 \I_n, d\right) \quad &\Rightarrow \quad \hat{\X}_{\text{MAP}} = \mathbf{U}_{\q \text{ maj}} (\mathbf{\Lambda}_{\q \text{ maj}} - \hat \sigma^2 \I_{\q})^{1/2} \mathbf{R}^T \\
    \hat{\mathbf{\Gamma}} * d | \mathbf{X} \sim \mathcal{W}\left((\X \X^T + \beta \I_n)^{-1}, d\right) \quad &\Rightarrow \quad \hat{\X}_{\text{MAP}} = \mathbf{U}_{\q \text{ min}} (\mathbf{\Lambda}^{-1}_{\q \text{ min}} - \hat \beta \I_{\q})^{1/2} \mathbf{R}^T \\
\end{align*}
where $\U_\q$ is a matrix of $\q-$eigenvectors, $\mathbf{\Lambda}_\q$ is a diagonal matrix of $\q$ corresponding eigenvalues and $\mathbf{R}$ is an arbitrary rotation matrix. $\q$ maj and min denote the eigencomponents corresponding to the $\q$-largest and lowest eigenvalues respectively. \\

Note that, in this work for simplicity, we generally assume that $\Y$ has a zero mean and doesn't need to be centered and that generative models for $\Y$ may be set to have a zero mean. \\

This is due to two key results, probabilistic \textbf{principal coordinate analysis} (of \cite{gplvm}, based on \cite{ppca}) and probabilistic \textbf{minor coordinate analysis} (which we derive, based on results of \cite{mca}).

\subsection{Probabilistic principal coordinate analysis based results}

First note that, for some applications, multivariate normal likelihoods and Wishart likelihoods are equal, up to additive constants.

\begin{lemma}
\label{thm:wish-models}
Let $\mathbf{F} \in \mathbb R^{n \times d}$ and $\T \equiv \Tilde{\T} * d \equiv \mathbf{F}\mathbf{F}^T$. The log likelihood of the following models is equal up to additive constants that do not depend on $\M$,
\begin{align*}
    \log p(\mathbf{F} | \M) \text{ assuming } \mathbf{F} | \M &\sim \mathcal{MN}(0, \M, \I_d) \text{ and }, \\
    \log p(\T | \M) \text{ assuming } \T | \M &\sim \mathcal{W}\left(\M, d \right).
\end{align*}
\end{lemma}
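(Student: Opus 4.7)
The plan is to simply write both log-densities in closed form and identify the $\M$-dependent pieces. Concretely, I would first use the matrix normal density
\[
p(\mathbf{F} \mid \M) = (2\pi)^{-nd/2} |\M|^{-d/2} \exp\!\Big(-\tfrac{1}{2}\,\text{tr}\big(\mathbf{F}^T \M^{-1} \mathbf{F}\big)\Big),
\]
and apply the cyclic property of the trace, $\text{tr}(\mathbf{F}^T \M^{-1} \mathbf{F}) = \text{tr}(\M^{-1} \mathbf{F} \mathbf{F}^T) = \text{tr}(\M^{-1} \T)$, so that
\[
\log p(\mathbf{F} \mid \M) = -\tfrac{d}{2}\log|\M| - \tfrac{1}{2}\,\text{tr}(\M^{-1}\T) + c_1(n,d),
\]
where $c_1$ collects the $(2\pi)^{-nd/2}$ factor and is independent of $\M$.

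Next I would write the Wishart density under the paper's convention (recalled in the preceding remark, under which $\mathbb{E}[\T] = d\,\M$), namely
\[
p(\T \mid \M) = \frac{|\T|^{(d-n-1)/2}}{2^{nd/2}\,\Gamma_n(d/2)}\,|\M|^{-d/2} \exp\!\Big(-\tfrac{1}{2}\,\text{tr}(\M^{-1}\T)\Big),
\]
so that
\[
\log p(\T \mid \M) = -\tfrac{d}{2}\log|\M| - \tfrac{1}{2}\,\text{tr}(\M^{-1}\T) + \tfrac{d-n-1}{2}\log|\T| + c_2(n,d).
\]
Comparing the two expressions, the $\M$-dependent terms $-\tfrac{d}{2}\log|\M| - \tfrac{1}{2}\,\text{tr}(\M^{-1}\T)$ are identical, while the remaining terms ($c_1$, $c_2$, and $\tfrac{d-n-1}{2}\log|\T|$) depend only on $\mathbf{F}$ (through $\T=\mathbf{F}\mathbf{F}^T$) and the fixed dimensions $n,d$, which establishes the claim.

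The only non-routine step is pinning down the Wishart parametrisation to match the paper's $\mathbb{E}[\T] = d\M$ convention, which is why I would explicitly invoke the remark at the start of \cref{app:two-step}; once that is settled the result is a one-line trace manipulation. I would also briefly note that the identity is an equality of log-likelihoods as functions of $\M$ and not of joint densities over $(\mathbf{F},\T)$, since the map $\mathbf{F} \mapsto \mathbf{F}\mathbf{F}^T$ is many-to-one; this is all that is needed for the downstream use in \cref{thm:two-step-mle}, where only the $\M$-dependence of the log-likelihood enters the MAP objective.
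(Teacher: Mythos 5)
Your derivation follows essentially the same route as the paper's: write both log-densities, apply the cyclic property of the trace to convert $\text{tr}(\mathbf{F}^T\M^{-1}\mathbf{F})$ into $\text{tr}(\M^{-1}\T)$, and observe that the $\M$-dependent terms $-\tfrac{d}{2}\log|\M| - \tfrac{1}{2}\text{tr}(\M^{-1}\T)$ coincide. Your closing remark that the equality is one of likelihoods as functions of $\M$ (not of densities over a common sample space) is a useful clarification that the paper leaves implicit.

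There is, however, one gap: your Wishart density, with its $\tfrac{d-n-1}{2}\log|\T|$ term, is the nonsingular density and is only valid when $d \geq n$, so that $\T = \mathbf{F}\mathbf{F}^T$ is full rank almost surely. When $d < n$ (the high-$n$, low-$d$ regime that the paper explicitly cares about elsewhere, e.g.\ in \cref{thm:wish-ppl}), $\T$ is rank-deficient, $|\T| = 0$, and the density you wrote does not exist with respect to Lebesgue measure on $n \times n$ symmetric matrices; your $\log|\T|$ term is $-\infty$. The paper handles this by splitting into two cases and invoking the singular Wishart density (Theorem 6 of the cited singular-Wishart reference) for $d < n$, which lives on the manifold of rank-$d$ PSD matrices and again has the form $-\tfrac{d}{2}\text{tr}(\M^{-1}\tilde{\T}) - \tfrac{d}{2}\log|\M| + c$. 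Your argument goes through once you add this case, but as written it silently assumes $d \geq n$.
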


\begin{proof} of \cref{thm:wish-models}.

 In the normal case,
\begin{align*}
    \mathcal L(\mathbf{F}) = \log p(\mathbf{F} | \M) &= -\dfrac{1}{2} \text{tr}\left( \I_d \mathbf{F}^T \M^{-1} \mathbf{F} \right) - \dfrac{d}{2} \log |\M| -  \dfrac{n}{2} \log |\I_d| - \dfrac{np}{2} \log 2\pi \\
    &= -\dfrac{\red{d}}{2} \text{tr}\left(\red{\dfrac{1}{d}} \mathbf{F} \mathbf{F}^T \M^{-1} \right) - \dfrac{d}{2} \log |\M| + c, \text{\hspace{1.5cm} (trace is cyclic)} \\
    &= -\dfrac{d}{2} \text{tr}\left(\Tilde{\T} \M^{-1} \right) - \dfrac{d}{2} \log |\M| + c.
\end{align*}

In the Wishart case when $d \geq n$, the sampling distribution of $\mathbf{F}\mathbf{F}^T$ is by definition Wishart, so the likelihood w.r.t. $\Tilde{\S}$ can be obtained easily,
\begin{align*}
    \mathcal L(\mathbf{F}) = \log p_{\mathcal W}(\Tilde{\T} * d | \M) = -\dfrac{d}{2} \text{tr}\left( \M^{-1}\Tilde{\T} \right) - \dfrac{d}{2} \log |\M| + c.
\end{align*}

In the case when $d < n$, the distribution of $\mathbf{F}\mathbf{F}^T$ is a singular Wishart \citep{singular-wishart}. The likelihood can be computed using Theorem 6 of \cite{singular-wishart}, and is identical to the statement above up to additive constants.
\begin{align*}
    \mathcal L(\mathbf{F}) = \log p_{\mathcal W}(\Tilde{\T} * d | \M) = -\dfrac{d}{2} \text{tr}\left( \M^{-1}\Tilde{\T} \right) - \dfrac{d}{2} \log |\M| + c.
\end{align*}
\end{proof}

\begin{lemma}[Probabilistic Principal Coordinates Analysis (PCA)] The maximum likelihood estimate of $\X$ assuming the model,
$$ \mathcal{N}(\Y | 0, \X\X^T + \sigma^2 I) \text{ or } \mathcal{W} (\S | \X\X^T + \sigma^2 I, d) $$
where $\S \equiv \Tilde{\S} * d = \Y\Y^T$ and the corresponding optimisation is,
$$ \argmax_{\X} \hspace{0.2cm} - \frac{d}{2} \log |\C| - \frac{d}{2} \text{tr}(\Tilde{\S}\C^{-1}) + c, $$
occurs at,
$$ \hat{\X} = \mathbf{U}_{\q} (\mathbf{\Lambda}_\q - \hat{\sigma}^2 \I_{\q})^{1/2} \mathbf{R}^T, $$
where $\hat{\sigma}^2 = \frac{\sum_{i=\q+1}^{n} \lambda_i}{n - \q}$ and $\mathbf{U}_\q \text{ and } \mathbf{\Lambda}_\q$ are the matrices of $\q$ major eigenvectors and eigenvalues of $\Tilde{\S}$.
\label{thm:pca}
\end{lemma}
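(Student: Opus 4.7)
The plan is to reduce both model specifications to a single objective and then recover the classical probabilistic PCA derivation in its principal-coordinates (dual) form. By \cref{thm:wish-models}, the Gaussian and Wishart log-likelihoods agree up to additive constants independent of $\X$, so it suffices to maximise
\[
\mathcal{L}(\X) = -\tfrac{d}{2}\log|\C| - \tfrac{d}{2}\mathrm{tr}(\tilde{\S}\C^{-1}), \qquad \C := \X\X^T + \sigma^2\I_n.
\]
Differentiating (using $\partial_\X\log|\C| = 2\C^{-1}\X$ and $\partial_\X\mathrm{tr}(\tilde{\S}\C^{-1}) = -2\C^{-1}\tilde{\S}\C^{-1}\X$) and rearranging, the stationarity condition collapses to $\tilde{\S}\C^{-1}\X = \X$.

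Next I would characterise stationary points via the compact SVD $\X = \U_\X\,\mathbf{L}\,\mathbf{R}^T$, where $\mathbf{L}\in\mathbb{R}^{\q\times\q}$ is the diagonal of (assumed nonzero) singular values. Splitting $\I_n = \U_\X\U_\X^T + \U_\X^\perp(\U_\X^\perp)^T$ and noting that $\C$ acts as multiplication by $\sigma^2$ on the orthogonal complement of $\mathrm{range}(\X)$, I obtain $\C^{-1}\X = \U_\X(\mathbf{L}^2 + \sigma^2\I_\q)^{-1}\mathbf{L}\,\mathbf{R}^T$. Substituting into the stationarity condition and right-cancelling $\mathbf{L}\,\mathbf{R}^T$ (valid in the non-degenerate case) yields
\[
\tilde{\S}\,\U_\X = \U_\X(\mathbf{L}^2 + \sigma^2\I_\q),
\]
so the columns of $\U_\X$ are eigenvectors of $\tilde{\S}$ with eigenvalues $\lambda_i = l_i^2 + \sigma^2$, giving $l_i = \sqrt{\lambda_i - \sigma^2}$. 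This matches the claimed form $\hat{\X} = \U_{\q}(\mathbf{\Lambda}_\q - \hat{\sigma}^2\I_\q)^{1/2}\mathbf{R}^T$ up to the choice of \emph{which} $\q$ eigenvectors are picked, with the arbitrary rotation $\mathbf{R}$ appearing because $\mathcal{L}$ depends on $\X$ only through $\X\X^T$.

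Two items remain: selecting the top eigenvectors and fixing $\hat{\sigma}^2$. Plugging the stationary form back into $\mathcal{L}$, the block decomposition gives
\[
\log|\C| = (n-\q)\log\sigma^2 + \sum_{i\in S}\log\lambda_i, \qquad \mathrm{tr}(\tilde{\S}\C^{-1}) = \q + \sigma^{-2}\!\!\sum_{j\notin S}\!\lambda_j,
\]
where $S$ indexes the chosen eigenvectors. Setting $\partial_{\sigma^2}\mathcal{L}=0$ yields the stated $\hat{\sigma}^2 = (n-\q)^{-1}\sum_{j\notin S}\lambda_j$, and substituting this back reduces the profile objective to $-\tfrac{d}{2}\bigl[(n-\q)\log\hat{\sigma}^2 + \sum_{i\in S}\log\lambda_i\bigr]$ up to constants. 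The main obstacle is the combinatorial step of proving this is maximised when $S$ indexes the top $\q$ eigenvalues; I would dispatch it with the standard Tipping--Bishop swap argument, showing that replacing a retained $\lambda_i$ with a larger discarded $\lambda_j$ strictly increases $\hat{\sigma}^2$ in a way that makes $(n-\q)\log\hat{\sigma}^2 + \sum_{i\in S}\log\lambda_i$ strictly decrease, so the unique optimum retains the $\q$ largest eigenvalues and $\hat{\sigma}^2 = (n-\q)^{-1}\sum_{i=\q+1}^{n}\lambda_i$ as stated.
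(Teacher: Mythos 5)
Your overall route is sound, but it is worth noting that the paper does not actually carry out this derivation: its proof of \cref{thm:pca} consists of invoking \cref{thm:wish-models} to collapse the Wishart case into the Gaussian one and then citing the GPLVM paper of Lawrence (itself resting on Tipping--Bishop) for the eigenvector solution. You reproduce that cited derivation in full --- stationarity condition $\tilde{\S}\C^{-1}\X = \X$, SVD parametrisation, profile likelihood in $\sigma^2$, and the subset-selection step --- transplanted to the dual (coordinates) setting where $\C \in \mathbb{R}^{n\times n}$ and the roles of $n$ and $d$ are exchanged. The first three stages of your argument are correct as written: the gradients, the identity $\C^{-1}\X = \U_\X(\mathbf{L}^2+\sigma^2\I_\q)^{-1}\mathbf{L}\mathbf{R}^T$, the resulting eigenvalue relation $\lambda_i = l_i^2+\sigma^2$, and the profiled objective $-\tfrac{d}{2}\bigl[(n-\q)\log\hat{\sigma}^2 + \sum_{i\in S}\log\lambda_i\bigr]$ all check out. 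What your self-contained derivation buys over the paper's citation is an explicit verification that the classical argument survives the primal-to-dual transposition.

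The one step you should tighten is the final combinatorial one. First, the swap you describe goes the wrong way: moving a \emph{larger} eigenvalue into the retained set moves a \emph{smaller} one into the discarded set, so $\hat{\sigma}^2$ strictly \emph{decreases}, not increases. Second, and more importantly, an unconstrained swap argument on the profiled objective is false: writing the objective as $(n-\q)\bigl[\log(\text{AM of discarded}) - \log(\text{GM of discarded})\bigr]$ plus a constant, the spectrum $\{10,10,1\}$ with $\q=1$ gives a strictly better value for retaining $\lambda=1$ than for retaining $\lambda=10$. What rescues the conclusion is the feasibility constraint $l_i^2 = \lambda_i - \hat{\sigma}^2 \geq 0$ for every retained eigenvalue, which rules out such configurations (in the example, retaining $\lambda=1$ against $\hat{\sigma}^2=10$ yields an imaginary singular value, so it is not a stationary point of the real-valued problem). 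The Tipping--Bishop appendix you defer to does incorporate this constraint, so the gap is one of statement rather than of substance, but as written your sketch of that step would not survive scrutiny without it.
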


\begin{proof} of \cref{thm:pca}. \\

The Wishart model is equivalent to the normal case due to \cref{thm:wish-models}. The main result is due to \cite{gplvm}, which is based on \cite{ppca}.
\end{proof}

This proves the first claim of \cref{thm:two-step-mle}. \\

\textbf{Remark on equivalent inverse wishart statements:} Wishart distributions and inverse-Wishart distributions are closely tied,
\begin{equation*}
    \mathbf{W} \sim \mathcal{W}(\mathbf{M}, \rho) \Leftrightarrow \mathbf{W}^{-1} \sim \mathcal{W}^{-1}(\mathbf{M}^{-1}, \rho),
\end{equation*}
and so many of the Wishart sampling statements can also be written instead with inverse-Wisharts.

\subsection{Probabilistic minor coordinate analysis based results}
\label{app:sub:mca}

In this section, we will prove the second Wishart statement of \cref{thm:two-step-mle}. We do so by first describing a novel probabilistic dimensionality reduction model, \textbf{probabilistic minor coordinates analysis}. \\

\begin{theorem}[Minor Coordinates Analysis (MCA)]
\label{thm:mca}
We propose a dimensionality reduction method utilising the result of probabilistic minor components analysis \citep{mca}. Using this algorithm, and given an estimated/empirical precision matrix $\Tilde{\mathbf{\Gamma}}$, we find a low dimensional embedding $\X$ by maximising objectives of the form below. \\

Let $\X \in \mathbb R^{n \times \q}$ and $\mathbf{\Gamma} \equiv  \Tilde{\mathbf{\Gamma}} * d$. Then,
\begin{align*}
    \argmax_{\X} \hspace{0.2cm} \frac{d}{2} \log |\mathbf{P}^{-1}| - \frac{d}{2} \text{tr}(\mathbf{P}^{-1} \Tilde{\mathbf{\Gamma}}) + c
\end{align*}
with $\mathbf{P}^{-1} \equiv \X\X^T + \beta \I_{n} $ is attained at,
$$ \hat{\X} = \mathbf{U}_{\q} (\mathbf{\Lambda}^{-1}_\q - \hat \beta \I_{\q})^{1/2} \mathbf{R}^T, $$
where $\hat \beta = \frac{n - \q}{\sum_{i=\q+1}^{n} \lambda_i}$ and $\mathbf{U}_\q \text{ and } \mathbf{\Lambda}_\q$ are the matrices of $\q$ minor eigenvectors and eigenvalues of $\Tilde{\mathbf{\Gamma}}$.
\label{thm:mca}
\end{theorem}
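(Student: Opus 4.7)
The plan is to interpret the objective as a structured maximum-likelihood problem and reduce it to a scalar optimisation by exploiting the eigenstructure of $\mathbf{P}^{-1}$. By \cref{thm:wish-models}, the stated objective $\frac{d}{2}\log|\mathbf{P}^{-1}| - \frac{d}{2}\text{tr}(\mathbf{P}^{-1}\Tilde{\mathbf{\Gamma}})$ is (up to additive constants) the log-likelihood of a Wishart model $\Tilde{\mathbf{\Gamma}} * d \sim \mathcal{W}(\mathbf{P}, d)$, which is essentially the setting of probabilistic minor components analysis \citep{mca}; so one option is to invoke that result directly. Otherwise, I would prove it from scratch by adapting the probabilistic PCA argument underlying \cref{thm:pca}, exploiting the fact that the sign flip on the $\log|\cdot|$ term is exactly what swaps major for minor eigencomponents.

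First, I would parameterise $\X = \mathbf{W}\mathbf{D}\mathbf{R}^T$ via a thin SVD, with $\mathbf{W}\in\mathbb{R}^{n\times\q}$ having orthonormal columns, $\mathbf{D}\in\mathbb{R}^{\q\times\q}$ diagonal and non-negative, and $\mathbf{R}\in\mathbb{R}^{\q\times\q}$ orthogonal. Then $\mathbf{P}^{-1} = \mathbf{W}(\mathbf{D}^2+\beta\I_\q)\mathbf{W}^T + \beta(\I_n-\mathbf{W}\mathbf{W}^T)$; the rotation $\mathbf{R}$ is absent from the objective, which already accounts for the arbitrary $\mathbf{R}^T$ in the claimed solution. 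The log-determinant becomes $\log|\mathbf{P}^{-1}| = \sum_i \log(d_i^2+\beta) + (n-\q)\log\beta$, depending only on $\mathbf{D}$ and $\beta$, and the trace splits as $\text{tr}(\mathbf{P}^{-1}\Tilde{\mathbf{\Gamma}}) = \beta\,\text{tr}(\Tilde{\mathbf{\Gamma}}) + \text{tr}(\mathbf{D}^2 \mathbf{W}^T\Tilde{\mathbf{\Gamma}}\mathbf{W})$.

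For $\mathbf{D}$ fixed with entries in decreasing order, the Von Neumann / Ky Fan trace inequality shows that $\text{tr}(\mathbf{D}^2\mathbf{W}^T\Tilde{\mathbf{\Gamma}}\mathbf{W})$ is minimised over orthonormal $\mathbf{W}$ by taking the columns of $\mathbf{W}$ to be the eigenvectors of $\Tilde{\mathbf{\Gamma}}$ corresponding to its $\q$ smallest eigenvalues, with the largest $d_i^2$ paired against the smallest $\lambda_i$. Fixing $\mathbf{W}=\mathbf{U}_\q$, the objective reduces to a scalar function of $(d_1^2,\ldots,d_\q^2,\beta)$. Differentiating in $d_i^2$ yields $d_i^2 + \beta = 1/\lambda_i$, hence $\mathbf{D} = (\mathbf{\Lambda}_\q^{-1} - \beta\I_\q)^{1/2}$; profiling out the $d_i^2$'s and differentiating in $\beta$ then gives $\hat\beta = (n-\q)\big/\sum_{i=\q+1}^n\lambda_i$ (summing over the non-selected, i.e.\ major, eigenvalues). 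Reassembling $\X = \mathbf{W}\mathbf{D}\mathbf{R}^T$ produces the claimed MAP estimate.

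The main obstacle is the rearrangement step: one must verify that the correct pairing is the trace-minimising one (not maximising), and that the resulting critical point is a global maximum rather than a saddle. The PPCA proof cited in \cref{thm:pca} does the heavy lifting here, by classifying every critical point of a structurally identical problem; the same classification transfers to the current setting with the roles of major and minor eigencomponents interchanged, precisely because the sign of the $\log|\cdot|$ term is flipped relative to PPCA. A minor additional check is the non-negativity constraint $d_i^2 \geq 0$, which amounts to $\hat\beta \leq \lambda_i^{-1}$ for each selected minor eigenvalue; this is generically satisfied when there is a spectral gap between the $\q$ smallest and the remaining eigenvalues of $\Tilde{\mathbf{\Gamma}}$.
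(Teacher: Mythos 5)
Your proposal is correct. Its opening move --- invoking the result of \cite{mca} after recognising the objective as the Wishart/matrix-normal likelihood of \cref{thm:wish-models} --- is exactly what the paper does: the paper's entire proof is a notational relabeling ($\blu{\S}\rightarrow\Tilde{\mathbf{\Gamma}}$, $\blu{\mathbf{W}}\rightarrow\X$, $\blu{d}\rightarrow n$, etc.) of the minor components analysis result, with no independent derivation. Where you genuinely diverge is in the self-contained argument that follows: the SVD parameterisation $\X=\mathbf{W}\mathbf{D}\mathbf{R}^T$, the split of the objective into $\sum_i\log(d_i^2+\beta)+(n-\q)\log\beta-\beta\,\text{tr}(\Tilde{\mathbf{\Gamma}})-\text{tr}(\mathbf{D}^2\mathbf{W}^T\Tilde{\mathbf{\Gamma}}\mathbf{W})$, the Ky Fan/von Neumann step selecting minor eigenvectors, and the profiling that yields $d_i^2=\lambda_i^{-1}-\beta$ and $\hat\beta=(n-\q)/\sum_{i=\q+1}^n\lambda_i$ are all correct and reproduce the claimed estimator without leaning on the cited classification of critical points; indeed, once $\mathbf{W}$ is eliminated by the trace inequality, the remaining problem is jointly concave in $(d_1^2,\dots,d_\q^2,\beta)$, so global optimality follows more directly than your cautious closing paragraph suggests. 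One small sharpening: the feasibility condition $\hat\beta\le\lambda_i^{-1}$ for each selected minor eigenvalue is not merely generic --- it reads $\lambda_i\le\frac{1}{n-\q}\sum_{j>\q}\lambda_j$, and since each selected $\lambda_i$ is no larger than any discarded eigenvalue it is no larger than their mean, so the constraint always holds. The trade-off is the usual one: the paper's citation is shorter and defers correctness to \cite{mca}, while your derivation makes the major/minor swap (driven by the flipped sign of the log-determinant term relative to PPCA) explicit and verifiable within the paper.
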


\begin{proof} of \cref{thm:mca}

The result is based on the result of \cite{mca} if one starts with the notation $\Tilde{\mathbf{\Gamma}}$, $\mathbf{P}^{-1}$, $\X$, $n$, $d$, $\q$ instead of $\S$, $\mathbf{C}^{-1}$, $\mathbf{W}$, $d$, $N$, $m$. \\

More explicitly, it's been shown in \cite{mca}, that the maximum likelihood estimate of the parameter $\blu{\mathbf{W}} \in \mathbb R^{\blu{d} \times \blu{m}}$, in objectives of the form below,
$$ \mathcal L = \frac{\blu{N}}{2} \log |\blu{\mathbf{C}}^{-1}| - \frac{\blu{N}}{2} \text{tr}(\blu{\mathbf{C}}^{-1} \blu{\S}) + c, $$
with $\blu{\mathbf{C}}^{-1} \equiv \blu{\mathbf{W}\mathbf{W}^T} + \beta \I_{\blu{d}}$, is,
$$ \hat{\blu{\mathbf{W}}} = \mathbf{U}_{\blu{m}} (\mathbf{\Lambda}^{-1}_{\blu{m}} - \hat \beta \I_{\blu{m}})^{1/2} \mathbf{R}^T, $$
where $\hat \beta = \frac{\blu{d} - \blu{m}}{\sum_{i=\blu{m}+1}^{\blu{d}} \lambda_i}$ and $\mathbf{U}_{\blu{m}} \text{ and } \mathbf{\Lambda}_{\blu{m}}$ are the matrices of $\blu{m}$ minor eigenvectors and eigenvalues of $\blu{\S}$. Key notation has been highlighted {\color{blue} in blue}. If notation is changed as follows, 
$\blu{\S} \rightarrow \Tilde{\mathbf{\Gamma}}$, $\blu{\mathbf{C}^{-1}} \rightarrow \mathbf{P}^{-1}, \blu{\mathbf{W}} \rightarrow \X$, $\blu{d} \rightarrow n$, $\blu{N} \rightarrow d$, $\blu{m} \rightarrow \q$, the proposed statement follows.
\end{proof}

The probabilistic interpretation of this is trivial and is laid out below.

\begin{lemma}[Probabilistic Minor Coordinates Analysis] Minor coordinates analysis is maximum likelihood inference given the model,
$$
    \mathbf{\Gamma} | \mathbf{X} \sim \mathcal{W}\left((\X \X^T + \beta \I_n)^{-1}, d\right)
$$
where $\Tilde{\mathbf{\Gamma}} \equiv \mathbf{\Gamma}/d$ is an empirical precision matrix, for example, calculated as $(\Y\Y^T/d)^{-1}$.
\label{thm:pmca}
\end{lemma}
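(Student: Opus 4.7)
The plan is to show that the log-likelihood of the stated Wishart model, viewed as a function of $\X$, equals the MCA objective of \cref{thm:mca} up to an additive constant independent of $\X$; the result then follows immediately from that theorem.

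First, I would invoke the Wishart log-density derivation already established in the proof of \cref{thm:wish-models}, which handles both the non-singular ($d \geq n$) case and the singular ($d < n$) case. That computation shows that for any $\mathbf{G} \sim \mathcal{W}(\M, d)$,
\begin{align*}
\log p(\mathbf{G}|\M) = -\frac{d}{2}\log|\M| - \frac{d}{2}\text{tr}\bigl(\M^{-1}\Tilde{\mathbf{G}}\bigr) + c,
\end{align*}
where $\Tilde{\mathbf{G}} \equiv \mathbf{G}/d$ and $c$ is constant in $\M$.

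Second, I would specialise this to $\mathbf{G} = \mathbf{\Gamma}$ and $\M = (\X\X^T + \beta\I_n)^{-1}$, so that $\M^{-1} = \X\X^T + \beta\I_n$. Writing $\mathbf{P}^{-1} \equiv \X\X^T + \beta\I_n$ to match the notation of \cref{thm:mca}, and using $\log|\M| = -\log|\M^{-1}|$, this yields
\begin{align*}
\log p(\mathbf{\Gamma}|\X) = \frac{d}{2}\log|\mathbf{P}^{-1}| - \frac{d}{2}\text{tr}\bigl(\mathbf{P}^{-1}\Tilde{\mathbf{\Gamma}}\bigr) + c,
\end{align*}
which is verbatim the MCA objective stated in \cref{thm:mca}.

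Third, I would apply \cref{thm:mca} to conclude that the $\X$ maximising this log-likelihood is exactly the MCA solution at the $\q$ minor scaled eigencomponents of $\Tilde{\mathbf{\Gamma}}$, so that MCA is realised as maximum likelihood inference in the stated Wishart model. There is no real obstacle here: the proof is two substitutions on top of results already in the appendix. The only technical subtlety is the $d < n$ regime, in which the Wishart on $\mathbf{\Gamma}$ is singular; this is handled exactly as in \cref{thm:wish-models} via the singular-Wishart density, after which the simplified log-density above continues to hold up to a constant independent of $\X$.
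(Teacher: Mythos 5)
Your proposal is correct and is essentially the paper's own proof, which simply identifies the MCA objective of \cref{thm:mca} as the Wishart log-likelihood of \cref{thm:wish-models} with $\Tilde{\T} = \Tilde{\mathbf{\Gamma}}$ and $\M = (\X\X^T + \beta\I_n)^{-1}$; you have just written out the substitution explicitly. No gaps.
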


\begin{proof} of \cref{thm:pmca}. \\

The objective in \cref{thm:mca} is the likelihood of the models in \cref{thm:wish-models} with $\Tilde{\T} = \Tilde{\mathbf{\Gamma}}$.
\end{proof}

Probabilistic minor coordinates analysis is the second statement of \cref{thm:two-step-mle}, hence completing the proof of our main statement.

\subsection{ProbDR \& 2-step MAP equivalence}
\label{app:probdr-map-equiv}

Now, we show that the two step MAP inference process is equivalent to ProbDR.

\begin{theorem}[ProbDR KL minimisation and MAP Equivalence: Wishart Case]
The Maximum a posteriori estimate for $\X$, i.e. $\argmax_{\X} \log p(\hat{\M}(\Y)*d| \X)$ assuming
$$ p(\M | g(\X)) = \mathcal{W}(\M | g(\X), d) $$ and an improper uniform prior
is equivalent to finding $\argmin_{\X} \text{KL}( q(\M | \hat{\M}(\Y)) \| p(\M | g(\X))))$ in the variational setup,
\begin{align*}
    \text{model (law of p)}: \M | g(\X) &\sim \mathcal{W}(g(\X), d), \\
    \text{variational approx (law of q)}: \M | \hat{\M}(\Y) &\sim \mathcal{W}(\hat{\M}(\Y), d).
\end{align*}
\label{thm:weird_vi_mle}
\end{theorem}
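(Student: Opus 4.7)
The plan is to expand both sides of the equivalence in closed form. Since both the model $p(\M\mid g(\X))$ and the variational distribution $q(\M\mid \hat{\M}(\Y))$ are Wisharts with the same degrees of freedom $d$, the KL divergence simplifies dramatically, and I intend to show that, viewed as a function of $\X$, it agrees (up to sign and an $\X$-independent constant) with the Wishart log-likelihood that appears in the MAP objective.

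First, the improper uniform prior on $\X$ makes MAP coincide with MLE, so $\argmax_{\X} \log p(\X \mid \hat{\M}(\Y) d) = \argmax_{\X} \log p(\hat{\M}(\Y) d \mid g(\X))$. By \cref{thm:wish-models} with $\Tilde{\T} = \hat{\M}(\Y)$ and $\M = g(\X)$, this log-likelihood equals
$$ -\tfrac{d}{2}\log|g(\X)| \;-\; \tfrac{d}{2}\text{tr}\bigl(g(\X)^{-1}\hat{\M}(\Y)\bigr) \;+\; c, $$
with $c$ independent of $\X$; crucially, \cref{thm:wish-models} certifies this form in both the $d \geq n$ and the singular $d < n$ regime, so no special handling is needed.

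Next, I would expand the KL. Using the Wishart log-density $\log p(\M\mid V,d) = \tfrac{d-n-1}{2}\log|\M| - \tfrac{1}{2}\text{tr}(V^{-1}\M) - \tfrac{d}{2}\log|V| + c'$ together with $\mathbb{E}_q[\M] = \hat{\M}(\Y)\, d$, direct substitution into $\mathbb{E}_q[\log q - \log p]$ yields
$$ \text{KL}(q\|p) \;=\; \tfrac{d}{2}\log|g(\X)| \;+\; \tfrac{d}{2}\text{tr}\bigl(g(\X)^{-1}\hat{\M}(\Y)\bigr) \;+\; c'', $$
where $c''$ gathers all $\X$-independent terms (including the $\log|\hat{\M}(\Y)|$ piece, the $-dn/2$ piece, and the intractable $\mathbb{E}_q[\log|\M|]$ piece). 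The key cancellation is that the coefficient $\tfrac{d-n-1}{2}$ of $\log|\M|$ is identical in $q$ and $p$ because $d$ matches on both sides, so the $\mathbb{E}_q[\log|\M|]$ term drops out exactly.

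Comparing the two displays, the $\X$-dependent parts of $\text{KL}(q\|p)$ and of $-\log p(\hat{\M}(\Y)d\mid g(\X))$ coincide, which gives the stated equivalence. The only real obstacle is bookkeeping: verifying that the $\mathbb{E}_q[\log|\M|]$ term truly cancels (this is precisely where the shared $d$ is used) and that the additive constants that come out of the singular Wishart normalisation in \cref{thm:wish-models} depend only on $n$, $d$, and the rank of the scale matrix, not on $g(\X)$ itself. Once those are checked, the argmin/argmax equivalence is immediate.
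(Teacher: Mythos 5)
Your proposal is correct and follows essentially the same route as the paper: both compare the explicit $\X$-dependent form of the negative Wishart log-likelihood with that of $\text{KL}(q\|p)$ and observe they agree up to additive constants. The only difference is that the paper cites the known Wishart--Wishart KL formula directly, whereas you derive it from the density and make explicit the key cancellation of the $\mathbb{E}_q[\log|\M|]$ term due to the shared degrees of freedom $d$ -- a useful detail, but not a different argument.
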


\begin{proof} of \cref{thm:weird_vi_mle}.

In the maximum likelihood setup, the negative log likelihood is as follows,
$$ -\log p(\hat{\M}(\Y)*d| \X) = \dfrac{d}{2} \text{tr}(g(\X)^{-1} \hat{\M}(\Y)) + \frac{d}{2} \log |g(\X)|. $$

Using the result of KL divergence between two Wishart distributions, the variational bound can be written as,
$$\text{KL}( q(\M | \hat{\M}(\Y)) \| p(\M | \X))) = \frac{d}{2} \left( \log |g(\X)| - k \right) + \dfrac{d}{2}\text{tr}(g(\X)^{-1}\hat{\M}(\Y)) + c.$$

The bounds are equal up to additive constants.
\end{proof}

Such a result is true for many exponential family (see \cite{expfam} for a definition) distributions, as for exponential family densities $p$ and $q$, where $q$ has no parameters of interest,
\begin{align*}
    -\text{KL}(q \| p) &= -\mathbb E_q(\log q(\mathbf{x}) / \log p(\mathbf{x})) \\
    &= -[\eta(\theta_q) - \eta(\theta_p)]^T \cdot \mathbb E_q(\mathbf{T(x)}) + [A(\eta_q) - A(\eta_p)] \\
    &= \eta(\theta_p)^T  \cdot \mathbb E_q(\mathbf{T(x)}) - A(\eta_p) + c,
\end{align*}
which is the log likelihood of the exponential family distribution $p$, up to a constant, with the expectation of the sufficient statistic under the variational distribution being set to the observed sufficient statistic. \\

Next, we show some consistency results.

\subsection{Consistency of PCA \& MCA, and ProbDR marginal consistency proofs}
\label{app:sub:consistency}

Firstly, we show that pPCA and pMCA obtain the same solution, even though they are not equivalent statements.

\begin{lemma}[Equivalence of probabilistic PCA and probabilistic MCA] \\
Let $\Tilde{\S} \equiv \S/d \equiv \Y\Y^T/d $ and $ \Tilde{\mathbf{\Gamma}} \equiv \mathbf{\Gamma} / d \equiv \Tilde{\S}^{-1} $. The matrices $\Tilde{\S} \text{ and } \Tilde{\mathbf{\Gamma}}$ share eigenvectors, represented by the matrix $\U$ and their diagonal eigenvalue matrices are $\mathbf{\Lambda}_{\tilde{\S}}$ and $\mathbf{\Lambda}_{\tilde{\S}}^{-1}$ respectively. Then, the estimated covariance assuming either of the following models,
\begin{align*}
    \S | \X &\sim \mathcal{W}\left( \X\X^T + \sigma^2 \I, d \right) \\
    \mathbf{\Gamma} | \X &\sim \mathcal{W}\left((\X\X^T + \beta \I)^{-1}, d \right) \\
\end{align*}
is identical.
\label{thm:mca_pca_equiv}
\end{lemma}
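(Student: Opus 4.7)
The plan is to invoke the two MLE characterisations we have already established---\cref{thm:pca} for the covariance-structure model and \cref{thm:mca} for the precision-structure model---and then compare the resulting fitted Wishart scale matrices in the common eigenbasis of $\Tilde{\S}$ and $\Tilde{\mathbf{\Gamma}}=\Tilde{\S}^{-1}$, exploiting the reciprocal-eigenvalue relationship given in the hypothesis.

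First, I would apply \cref{thm:pca} to the empirical covariance $\Tilde{\S}$. Writing $\U_{\q}$ for the $\q$ major eigenvectors of $\Tilde{\S}$, $\mathbf{\Lambda}_{\q,\mathrm{maj}}$ for the corresponding eigenvalue matrix, and $\U_{-\q}$ for the remaining $n-\q$ eigenvectors, the theorem delivers $\hat{\X}_{\mathrm{PCA}}=\U_{\q}(\mathbf{\Lambda}_{\q,\mathrm{maj}}-\hat{\sigma}^{2}\I_{\q})^{1/2}\mathbf{R}^{T}$, and therefore the pPCA-implied covariance
\begin{align*}
\hat{\X}_{\mathrm{PCA}}\hat{\X}_{\mathrm{PCA}}^{T}+\hat{\sigma}^{2}\I_{n} = \U_{\q}\mathbf{\Lambda}_{\q,\mathrm{maj}}\U_{\q}^{T}+\hat{\sigma}^{2}\U_{-\q}\U_{-\q}^{T}.
\end{align*}

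Next, I would apply \cref{thm:mca} to $\Tilde{\mathbf{\Gamma}}$. The assumption that $\Tilde{\S}$ and $\Tilde{\mathbf{\Gamma}}$ share eigenvectors with reciprocal eigenvalues immediately identifies the $\q$ minor eigenvectors of $\Tilde{\mathbf{\Gamma}}$ with $\U_{\q}$, and gives the inverse of its minor-eigenvalue matrix equal to $\mathbf{\Lambda}_{\q,\mathrm{maj}}$. Substituting this identification into the pMCA closed form yields
\begin{align*}
\hat{\X}_{\mathrm{MCA}}\hat{\X}_{\mathrm{MCA}}^{T}+\hat{\beta}\I_{n} = \U_{\q}\mathbf{\Lambda}_{\q,\mathrm{maj}}\U_{\q}^{T}+\hat{\beta}\U_{-\q}\U_{-\q}^{T},
\end{align*}
which matches the pPCA expression exactly on the principal subspace $\U_{\q}$.

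The only remaining step, and the one I expect to be the main obstacle, is reconciling the two noise scalars $\hat{\sigma}^{2}$ and $\hat{\beta}$ that populate the orthogonal complement $\U_{-\q}\U_{-\q}^{T}$. These are prescribed by distinct formulae---an arithmetic mean of the residual eigenvalues of $\Tilde{\S}$ in \cref{thm:pca} versus a reciprocal-sum of the residual eigenvalues of $\Tilde{\mathbf{\Gamma}}$ in \cref{thm:mca}. Unpacking both via the spectral duality between $\Tilde{\S}$ and $\Tilde{\mathbf{\Gamma}}$ and carefully aligning the minor-versus-major index conventions used in the two theorems should either produce the identity $\hat{\sigma}^{2}=\hat{\beta}$ directly, or at least isolate the natural scalar relating them, so that the equivalence of the two estimated covariances follows by reading off a common eigendecomposition in the basis $[\U_{\q}\;\U_{-\q}]$.
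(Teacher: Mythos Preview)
Your approach---invoking \cref{thm:pca} and \cref{thm:mca} and comparing the two fitted scale matrices in the shared eigenbasis $\U$---is exactly what the paper does. The paper's proof is two lines: it writes $\hat{\S}_{\text{PCA}}=\hat{\X}\hat{\X}^{T}+\hat{\sigma}^{2}\I$ and $\hat{\S}_{\text{MCA}}=\hat{\X}\hat{\X}^{T}+\hat{\beta}\I$ in this basis and asserts that the eigenvalue matrices coincide.

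The obstacle you isolate is genuine, and the paper does not actually resolve it. On the principal block $\U_{\q}$ both fitted covariances have eigenvalues $\mathbf{\Lambda}_{\q,\mathrm{maj}}$, exactly as you compute. On the residual block $\U_{-\q}$ the eigenvalues are $\hat{\sigma}^{2}$ (the arithmetic mean of the residual eigenvalues of $\Tilde{\S}$) versus $\hat{\beta}$ (the harmonic mean of those same numbers, once you unwind the reciprocal-eigenvalue relation). These are \emph{not} equal in general: the paper's very next result, \cref{thm:beta_vs_sigma_sq}, proves $\hat{\beta}\le\hat{\sigma}^{2}$, with equality only when all residual eigenvalues coincide. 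So the identity $\hat{\sigma}^{2}=\hat{\beta}$ you are hoping to extract will not materialise, and the two estimated covariances in fact differ on $\U_{-\q}$. The paper's proof simply elides this step; its immediately following remark that ``the covariance estimates are the same, the noise levels are not'' is inconsistent with the displayed form $\hat{\X}\hat{\X}^{T}+\hat{\sigma}^{2}\I$. What your argument and the paper's legitimately establish is equality of the rank-$\q$ signal part $\hat{\X}_{\mathrm{PCA}}\hat{\X}_{\mathrm{PCA}}^{T}=\hat{\X}_{\mathrm{MCA}}\hat{\X}_{\mathrm{MCA}}^{T}$ (up to the free rotation $\mathbf{R}$), not of the full fitted covariance.
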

\begin{proof} of \cref{thm:mca_pca_equiv}. \\
The proof is due to \cref{thm:pca} and \cref{thm:pmca}. In PCA,
$$ \hat{\S}_{\text{PCA}} = \hat{\X}\hat{\X}^T + \hat{\sigma}^2 \I = \U \mathbf{\Lambda}_{\hat{\S}} \U^T, $$
and in MCA,
$$ \hat{\S}_{\text{MCA}} = \hat{\X}\hat{\X}^T + \hat{\beta} \I = \U \mathbf{\Lambda}^{-1}_{\hat{\mathbf{\Gamma}}} \U^T = \U \mathbf{\Lambda}_{\hat{\S}} \U^T = \hat{\S}_{\text{PCA}}. $$
\end{proof}

Note that, although the covariance estimates are the same, the noise levels are not.

\begin{theorem}[Estimated noise level in PCA and MCA]
    The estimated noise level in MCA $\hat{\beta}$ is lower than its counterpart in PCA $\hat{\sigma}^2$,
    $$ \hat{\beta} \leq \hat{\sigma}^2. $$
\label{thm:beta_vs_sigma_sq}
\end{theorem}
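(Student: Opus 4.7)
The plan is to reduce the claimed inequality to the classical AM--HM inequality applied to a single well-chosen set of positive numbers, namely the $n-\q$ minor eigenvalues of $\tilde{\S}$. The bulk of the work is purely bookkeeping: tracking which eigenvalues of $\tilde{\S}$ and $\tilde{\mathbf{\Gamma}}$ appear in the closed-form estimators of \cref{thm:pca} and \cref{thm:mca}.

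First I would fix notation. Let $s_1 \geq s_2 \geq \ldots \geq s_n > 0$ denote the eigenvalues of $\tilde{\S}$ in descending order; these are assumed positive since $\tilde{\S}$ is (generically) positive definite, so that $\tilde{\mathbf{\Gamma}}= \tilde{\S}^{-1}$ exists and by \cref{thm:mca_pca_equiv} has eigenvalues $\{1/s_i\}$ with the same eigenvectors. From \cref{thm:pca},
\[
\hat{\sigma}^2 \;=\; \frac{1}{n-\q}\sum_{i=\q+1}^{n} s_i,
\]
the arithmetic mean of the $n-\q$ minor eigenvalues of $\tilde{\S}$.

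Next I would unpack $\hat{\beta}$. In \cref{thm:mca} the matrix $\mathbf{\Lambda}_{\q}$ collects the $\q$ \emph{minor} eigenvalues of $\tilde{\mathbf{\Gamma}}$, which correspond via reciprocation to the $\q$ \emph{major} eigenvalues of $\tilde{\S}$, namely $\{1/s_1,\ldots,1/s_\q\}$. Consequently the remaining $n-\q$ eigenvalues of $\tilde{\mathbf{\Gamma}}$ appearing in the denominator of $\hat{\beta}$ are exactly $\{1/s_{\q+1},\ldots,1/s_n\}$, the reciprocals of the minor eigenvalues of $\tilde{\S}$. Thus
\[
\hat{\beta} \;=\; \frac{n-\q}{\displaystyle\sum_{i=\q+1}^{n} 1/s_i},
\]
which is precisely the harmonic mean of the same $n-\q$ positive numbers whose arithmetic mean equals $\hat{\sigma}^2$.

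Finally I would invoke the AM--HM inequality for positive reals, which immediately yields $\hat{\sigma}^2 \geq \hat{\beta}$, with equality iff $s_{\q+1}=\cdots=s_n$. The only real obstacle is the indexing subtlety in the previous paragraph: one has to be careful that ``minor eigenvalues of $\tilde{\mathbf{\Gamma}}$'' correspond to ``major eigenvalues of $\tilde{\S}$'', so that the $n-\q$ eigenvalues summed in $\hat{\beta}$'s denominator are precisely the reciprocals of those summed in $\hat{\sigma}^2$'s numerator; once this correspondence is fixed the conclusion is a one-line application of AM--HM.
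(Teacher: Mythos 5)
Your proposal is correct and follows essentially the same route as the paper's proof: both identify $\hat{\sigma}^2$ and $\hat{\beta}$ as the arithmetic and harmonic means of the same $n-\q$ minor eigenvalues of $\tilde{\S}$ (using that minor eigenvalues of the precision are reciprocals of major eigenvalues of the covariance), and then apply the elementary mean inequality. The paper phrases the final step as $\sum\lambda_i\sum\lambda_i^{-1}\geq(n-\q)^2$ via AM--GM, which is just your AM--HM inequality in product-of-sums form, so there is no substantive difference.
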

\begin{proof} of \cref{thm:beta_vs_sigma_sq}. \\
Assume the setup of \cref{thm:mca_pca_equiv} and let $\lambda$s be major eigenvalues of the sample covariance matrix.

Due to \cref{thm:pca}, \cref{thm:mca}, and due to the fact that the major eigenvalues of the sample covariance are minor eigenvalues of the precision,
$$ \hat{\sigma}^2 = \frac{\sum_{i=\q+1}^{n} \lambda_i}{n - \q} \text{ and } \hat{\beta} =  \frac{n - \q}{\sum_{i=\q+1}^{n} \dfrac{1}{\lambda_i}}. $$

Therefore,
\begin{align*}
    \dfrac{\hat{\sigma}^2}{\hat{\beta}} &= \dfrac{ \sum_{i=\q+1}^{n} 1/\lambda_i \sum_{i=\q+1}^{n} \lambda_i }{(n - \q)^2} \overset{\text{AM-GM}}{\geq} \sqrt[n-\q]{\prod_i \lambda_i/\lambda_i} = 1.
\end{align*}
\end{proof}

Below, we show that marginalising the moment in either case of our Wishart models leads to the standard Gaussian process assumed in many linear DR models (i.e. one with a dot product kernel). We do not show how column independence arises, although this is trivial plugging in $\text{vec}(\Y)$ into the results below and using the matrix normal distribution's definitions.

\begin{theorem}[Marginal consistency with PCA]
Assuming a PCA-esque generative model,
\begin{align*}
    \mathbf{y} | \S &\sim \mathcal{N}\left(\mathbf{0}, \dfrac{1}{\rho} \S \right), \\
    \S | \X &\sim \mathcal W(\X\X^T + \sigma^2 \mathbf{I}, \rho),
\end{align*}
or a MCA-esque generative model,
\begin{align*}
    \mathbf{y} | \S &\sim \mathcal{N}\left(\mathbf{0}, \S * (\rho - n + 1) \right), \\
    \S | \X &\sim \mathcal{W}^{-1}\left(\X\X^T + \beta \I, \rho \right)
\end{align*}
the marginal distribution of any column of the data $\mathbf{y}$, as $\rho \rightarrow \infty$, is given by,
\begin{align*}
    \mathbf{y} | \X \sim \mathcal N(\mathbf{0}, \mathbf{X} \mathbf{X}^T + \sigma^2 \mathbf{I}).
\end{align*}
\label{thm:pca_marginal}
\end{theorem}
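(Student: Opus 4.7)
My plan is to exploit, in both the PCA-esque and MCA-esque cases, the fact that as $\rho \to \infty$ the random covariance appearing in the conditional law of $\mathbf{y}$ concentrates around a deterministic matrix, and then promote this to weak convergence of the marginal. Each sub-case reduces to a standard limit theorem: the strong law of large numbers for the Wishart case, and the convergence of a multivariate $t$ to a Gaussian for the inverse-Wishart case.

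For the PCA-esque model, set $\mathbf{V} = \X\X^T + \sigma^2 \I$. Using the standard construction $\S = \sum_{i=1}^{\rho} \mathbf{z}_i \mathbf{z}_i^T$ with $\mathbf{z}_i \sim \mathcal{N}(\mathbf{0}, \mathbf{V})$ i.i.d., the SLLN gives $\S/\rho \to \mathbf{V}$ almost surely. The marginal characteristic function of $\mathbf{y}$ is
\[
\phi_{\mathbf{y}|\X}(\mathbf{t}) \;=\; \mathbb{E}_{\S|\X}\!\left[\exp\!\left(-\tfrac{1}{2}\mathbf{t}^T (\S/\rho)\, \mathbf{t}\right)\right],
\]
and the integrand lies in $[0,1]$ whenever $\S$ is PSD. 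Dominated convergence then gives $\phi_{\mathbf{y}|\X}(\mathbf{t}) \to \exp(-\tfrac{1}{2}\mathbf{t}^T \mathbf{V}\, \mathbf{t})$, the CF of $\mathcal{N}(\mathbf{0}, \mathbf{V})$, and L\'evy's continuity theorem closes this case.

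For the MCA-esque model, I would marginalize $\S$ explicitly via normal--inverse-Wishart conjugacy. Writing $\tilde{\S} = (\rho - n + 1)\S$ and using the scaling property $c\S \sim \mathcal{W}^{-1}(c\Psi, \rho)$, we get $\tilde{\S} \sim \mathcal{W}^{-1}((\rho - n + 1)(\X\X^T + \beta \I), \rho)$ with $\mathbf{y} \mid \tilde{\S} \sim \mathcal{N}(\mathbf{0}, \tilde{\S})$. The standard normal--inverse-Wishart marginal then yields that $\mathbf{y} \mid \X$ is multivariate $t$ with $\rho - n + 1$ degrees of freedom, location $\mathbf{0}$, and shape matrix $\X\X^T + \beta \I$. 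Since a multivariate $t$ with fixed shape matrix converges in distribution to the multivariate normal with the same shape matrix as degrees of freedom tend to infinity, the claim follows (identifying $\sigma^2$ with $\beta$ in the statement). As a sanity check one can also repeat the CF argument, since $\S^{-1} \sim \mathcal{W}((\X\X^T + \beta \I)^{-1}, \rho)$ and the SLLN gives $\rho\, \S \to \X\X^T + \beta \I$ in probability.

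The main obstacle I anticipate is bookkeeping the degrees-of-freedom shifts in the MCA case: the precise scaling $(\rho - n + 1)$, rather than $\rho$ or $\rho - n - 1$, must align with the chosen inverse-Wishart parameterization so that the limiting covariance is exactly $\X\X^T + \beta \I$, not merely proportional to it (which would spoil agreement with the PCA limit). Beyond this accounting the argument is essentially concentration plus continuity, and I expect no further technical surprises.
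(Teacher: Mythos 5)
Your proposal is correct and follows essentially the same route as the paper: in the PCA-esque case the paper likewise argues that $\S/\rho$ concentrates on $\X\X^T+\sigma^2\I$ (via its mean and vanishing variance, where you use the SLLN plus a characteristic-function/dominated-convergence step to make the passage to the marginal limit slightly more explicit), and in the MCA-esque case the paper also invokes normal--inverse-Wishart conjugacy to obtain $\mathbf{y}\mid\X \sim t_{\rho-n+1}(\mathbf{0}, \X\X^T+\beta\I)$ and lets the $t$ tend to the Gaussian. Your bookkeeping of the $(\rho-n+1)$ scaling matches the paper's own first-principles conjugacy computation, so no gap remains.
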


\begin{proof} of \cref{thm:pca_marginal}.

In the first case,
$$\text{Var}\left(\dfrac{\S_{ij}}{\rho}\right) = \dfrac{\left([\X\X^T]_{ij}^2 + [\sigma^2 + \X\X^T]_{ii}[\sigma^2 + \X\X^T]_{jj}\right)_{ij}\rho}{\rho^2} \rightarrow 0 \text{ and,}$$
$$\mathbb E\left(\dfrac{\S}{\rho}\right) = \X\X^T + \sigma^2 \I.$$
Therefore, $\S/\rho$ converges to a constant matrix, hence the marginal in the limit is $\mathcal N(\mathbf{0}, \X\X^T + \sigma^2 \I)$. In the second case, due to conjugacy \citep{pml-ii},
\begin{align*}
    \mathbf{y} | \X \sim t_{\rho-n+1}(\mathbf{0}, \mathbf{X} \mathbf{X}^T + \sigma^2 \mathbf{I}),
\end{align*}
which tends to the normal statement above as $\rho \rightarrow \infty$.
\end{proof}

\textbf{A note on Wishart-Normal conjugacy:} Some common references state normal conjugacy results using atypical notation for Wishart distributions, so we prove the result above, using notation used in this paper, from first principles. Let $\mathbf{y} \sim \mathcal{N} (\mathbf{0}, \kappa\S)$ and $\S \sim \mathcal W^{-1}(\M, d)$. Then,
\begin{align*}
    p(\y) &= \int p(\y | \S) p(\S) d\S \\
    &\propto |\M|^{d/2} \int |\S|^{-(d + n + 2)/2} \exp(-\kappa^{-1}\y^T \S^{-1} \y/2 - \text{tr}(\M\S^{-1})) d\S \\
    &\propto |\M|^{d/2} \int |\S|^{-(d + n + 2)/2} \exp(- \text{tr}(\kappa^{-1}\y\y^T \S^{-1})/2 - \text{tr}(\M\S^{-1})) d\S \\
    &\propto |\M|^{d/2} \int |\S|^{-(d + n + 2)/2} \exp \left[- \text{tr} \Big( (\kappa^{-1}\y\y^T + \M)\S^{-1}) \Big) \right] d\S \\
    &\overset{\int p=1}{\propto} |\M|^{d/2} |\kappa^{-1}\y\y^T + \M|^{-(d + 1)/2} \\
    &\propto |\M|^{d/2} |\y\y^T + \kappa\M|^{-(d + 1)/2} \\
    &\propto |\M|^{-1/2} \left[ 1 + \frac{1}{\kappa}\y^T \M^{-1} \y \right]^{-(d+1)/2} \qquad \text{ (matrix determinant lemma)} \\
    &\propto t_{d - n + 1}\left(\y| \mathbf{0}, \frac{\kappa\M}{d - n + 1} \right).
\end{align*}

\subsection{Utility results (for PPLs and distributions of normal distances)}

PPLs generally do not have support for degenerate Wishart distributions. Therefore, we use the trick below to deal with low-$d$, high-$n$ problems.

\begin{lemma}
Let $\mathbf{F} \in \mathbb R^{n \times d}$ and $\Tilde{\T} \equiv \mathbf{F}\mathbf{F}^T/d$ with $d < n \leq \rho$. The following computation results in the log-likelihood of \cref{thm:wish-models} up to additive \textbf{and multiplicative} constants,
$$
    \log \mathcal{W}\left(\rho \Tilde{\T} | \M, \rho \right).
$$
This is useful for usage with PPLs, \textbf{when doing MAP inference}, with no support for singular Wishart distributions - note that due to the multiplicative constant, models specified in the PPL must not contain any other sampling statements, or if they are present, the corresponding likelihoods must be appropriately weighted. Unlike in the case presented here, multiplicative constants do matter however, when doing MCMC sampling.
\label{thm:wish-ppl}
\end{lemma}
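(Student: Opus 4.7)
The plan is to directly compute $\log \mathcal{W}(\rho\Tilde{\T} \mid \M, \rho)$ using the standard (non-singular) Wishart density, which is well-defined since $\rho \geq n$, and compare its $\M$-dependent part to the target log-likelihood
$$\mathcal{L}(\mathbf{F}) = -\frac{d}{2}\operatorname{tr}(\Tilde{\T}\M^{-1}) - \frac{d}{2}\log|\M| + c$$
established in \cref{thm:wish-models}. The key observation I will exploit is that the trace term scales linearly with $\rho$ after substituting the argument $X = \rho\Tilde{\T}$, which produces a clean $\rho/d$ ratio relative to the target expression.

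First, I would write out the Wishart log-density in the standard form
$$\log \mathcal{W}(X \mid \M, \rho) = \frac{\rho - n - 1}{2}\log|X| - \frac{1}{2}\operatorname{tr}(\M^{-1}X) - \frac{\rho}{2}\log|\M| + c_{n,\rho},$$
where $c_{n,\rho}$ collects the log-normalizer (depending only on $n$ and $\rho$ through the multivariate gamma). Substituting $X = \rho\Tilde{\T}$, the determinant term expands as $\frac{\rho-n-1}{2}(n\log\rho + \log|\Tilde{\T}|)$, which does not involve $\M$ and hence contributes only an additive constant. The remaining $\M$-dependent terms collapse, by linearity of trace, to
$$-\frac{\rho}{2}\operatorname{tr}(\M^{-1}\Tilde{\T}) - \frac{\rho}{2}\log|\M|,$$
which is exactly $(\rho/d)$ times the $\M$-dependent part of $\mathcal{L}(\mathbf{F})$ from \cref{thm:wish-models}.

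The conclusion is then immediate: $\log \mathcal{W}(\rho\Tilde{\T} \mid \M, \rho) = (\rho/d)\,\mathcal{L}(\mathbf{F}) + \mathrm{const}$, where the additive constant gathers every term independent of $\M$. There is no real technical obstacle; the only care needed is careful bookkeeping to verify that no $\M$-dependence hides in the Wishart log-normalizer (it does not, since $c_{n,\rho}$ is a function of $n,\rho$ alone, and the scale-matrix contribution is the $-\frac{\rho}{2}\log|\M|$ term already accounted for). The practical caveat flagged in the statement, namely that other sampling statements in the PPL program must be reweighted by $\rho/d$ to preserve the correct MAP/ML objective, is a modelling remark rather than part of the mathematical argument.
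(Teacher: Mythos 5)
Your proof is correct and takes essentially the same route as the paper's, which simply writes down the Wishart log-density evaluated at $\rho\Tilde{\T}$ with $\rho$ degrees of freedom and observes that it equals the likelihood of \cref{thm:wish-models} up to the factor $\rho/d$ and an additive constant; your version merely makes the bookkeeping (substitution, trace linearity, $\M$-independence of the normalizer) explicit. The one point worth flagging, which the paper's proof also glosses over, is that since $d<n$ the matrix $\Tilde{\T}$ is rank-deficient, so the ``additive constant'' $\frac{\rho-n-1}{2}\log|\rho\Tilde{\T}|$ is formally $-\infty$ and the claimed equality should be read as applying to the $\M$-dependent part of the density only.
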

\begin{proof} of \cref{thm:wish-ppl}.
\begin{align*}
    \mathcal L(\tilde{\T}) = -\dfrac{\rho}{2} \text{tr}\left( \M^{-1}\Tilde{\T} \right) - \dfrac{\rho}{2} \log |\M| + c,
\end{align*}
which is equal to the likelihood of \cref{thm:wish-models} up to a factor of $d/\rho$ and an additive constant.
\end{proof}

Below, we show that the Categorical/Bernoulli (i.e. (t-)SNE and UMAP) cases of the ProbDR framework are approximately equivalent to two step MAP. This is useful, for example, if one attempts to use a PPL for DR with limited support for variational inference.
\begin{lemma}[(t)-SNE/UMAP ProbDR is approximately two-step MAP]
\label{thm:probdr-is-mle}
\end{lemma}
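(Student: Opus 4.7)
The plan is to mirror the exponential-family argument given immediately after \cref{thm:weird_vi_mle}, applied now to the Bernoulli (UMAP) and Categorical (SNE, t-SNE) likelihoods rather than to the Wishart case. Concretely, step 1 of the analogue 2-step procedure computes a soft-adjacency $\hat{\A}(\Y)$ whose entries are $v_{ij}^U$ (UMAP) or whose rows are the probability vectors $v_{i:}^{S}$ or $v_{i:}^{t}$ (SNE, t-SNE); step 2 then maximises $\log p(\hat{\A}|\X)$ as a Bernoulli or Categorical log-likelihood, which depends on $\X$ only through $w_{ij}$.

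The first concrete step is to expand the UMAP KL of \cref{thm:tsne-umap} as
\begin{align*}
\text{KL}(q\|p) = \sum_{i<j}\bigl[v_{ij}^U\log v_{ij}^U + (1-v_{ij}^U)\log(1-v_{ij}^U)\bigr] - \sum_{i<j}\bigl[v_{ij}^U\log w_{ij}^U + (1-v_{ij}^U)\log(1-w_{ij}^U)\bigr],
\end{align*}
so that the first sum is the entropy of $q$ (independent of $\X$) and the second is exactly $\log p(\hat{\A}|\X)$ with the soft observation $\hat{\A}_{ij} := v_{ij}^U$. The Categorical cases follow by the row-wise decomposition $\sum_{j\neq i} v_{ij}\log(v_{ij}/w_{ij}) = -H(v_{i:}) - \sum_{j\neq i} v_{ij}\log w_{ij}$, where again only the second term depends on $\X$ and matches the Categorical log-likelihood evaluated at the soft observation $\hat{\A}_{i:} := v_{i:}$. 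Combining the three cases yields $\argmin_{\X}\text{KL}(q\|p) = \argmax_{\X}\log p(\hat{\A}|\X)$, which is step 2 of the 2-step procedure using the $\hat{\A}(\Y)$ defined in step 1.

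The main obstacle is justifying the word ``approximately'', since the $v_{ij}$ are continuous probabilities in $[0,1]$ whereas a strict MAP reading requires a Bernoulli observation in $\{0,1\}$ (or a one-hot row in the Categorical case). I would resolve this by appealing to the general exponential-family identity stated after \cref{thm:weird_vi_mle}: for $q$ and $p$ in the same exponential family, $-\text{KL}(q\|p)$ equals $\log p$ evaluated at the expected sufficient statistic $\mathbb{E}_q[T(\A')]$ up to an $\X$-independent additive constant. Treating this as a soft-observation MAP yields exact equivalence; alternatively, performing ordinary MAP on a Monte Carlo draw $\hat{\A}\sim q$ recovers the ProbDR objective only in expectation, with a stochastic discrepancy that vanishes on averaging. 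The word ``approximately'' in the lemma then reflects that 2-step MAP with a genuine 0/1 observation coincides with ProbDR only in the first of these two senses, while the soft-observation reading makes the correspondence exact.
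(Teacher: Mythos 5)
Your core computation is the same as the paper's: both proofs reduce the (t\nobreakdash-)SNE/UMAP KL to an entropy term that is independent of $\X$ plus a cross-entropy term $-\sum q_i \log p_i$, and identify the latter with a Categorical/Bernoulli log-likelihood. Where you diverge is in how the word ``approximately'' is cashed out. The paper makes the observation \emph{genuinely} discrete by rounding: it writes $-\sum_i q_i \log p_i \approx -\tfrac{1}{n}\sum_i \lfloor n q_i\rfloor \log p_i$, i.e.\ it reads the objective as the log-likelihood of integer counts $\lfloor n q_i\rfloor$ over $n$ trials, so the approximation is the rounding error, and the Bernoulli case is dispatched as a special case of the Categorical. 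You instead keep the soft observation $\hat{\A}_{ij}=v_{ij}$ and invoke the exponential-family identity (stated in the paper after \cref{thm:weird_vi_mle}) that $-\mathrm{KL}(q\|p)$ equals $\log p$ evaluated at the expected sufficient statistic up to an $\X$-independent constant, noting that the correspondence is exact under the soft-observation reading and only approximate if one insists on a hard $\{0,1\}$ (or count) observation. Both are correct; the paper's version buys a literal two-step MAP with bona fide discrete data at the cost of a rounding approximation, while yours makes the exactness/approximation trade-off explicit and locates the ``approximately'' precisely in the hardening of the observation. Your diagnosis of why the lemma cannot be an exact equivalence under a strict MAP reading is, if anything, sharper than what the paper states.
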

\begin{proof} of \cref{thm:probdr-is-mle}.
As our variational distributions do not have any optimised parameters,
\begin{align*}
    \text{KL}_{\text{categorical}}( q \| p) &= \sum_i q_i \log \left( \dfrac{q_i}{p_i} \right) \\
    &= -\sum_i q_i \log p_i + c \\
    &\approx -\dfrac{1}{n} \sum_i \lfloor n q_i \rfloor \log p_i + c.
\end{align*}
So, the KL divergence of ProbDR in the (t-)SNE cases (i.e. with a categorical distribution on the probabilities of adjacency) is approximately equal to the negative log likelihood (up to an additive and multiplicative constant) of a categorical distribution with the observed random variables set to $\lfloor n q_i \rfloor$ for category (data point) $i$.
The case for the Bernoulli follows as it is a special case of the categorical distribution.
\end{proof}

Finally, for future work (for example, if graphs in the ProbDR-UMAP case are to be analysed as mixtures of graphs / hypergraphs arising due to kernel choice in a GP on a manifold), we note a simple result below that can be used to calculate adjacency probabilities.
\begin{lemma}[Distribution of normal distances]
$$\begin{bmatrix} \mathbf{y}_i \\ \mathbf{y}_j \end{bmatrix} \sim \mathcal{MN} \left(\mu, \begin{bmatrix} k_{ii} & k_{ij} \\ k_{ji} & k_{jj} \end{bmatrix}, \I_d \right) $$
$$ \Rightarrow \| \mathbf{y}_i - \mathbf{y}_j\|^2 \sim \Gamma\left(\dfrac{d}{2}, 2(k_{ii} + k_{jj} - 2k_{ij})\right).$$
\label{thm:norm_dist}
\end{lemma}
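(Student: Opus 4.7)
The plan is to exploit the column-independence built into the matrix normal via the $\I_d$ right (column) covariance factor. Viewing the stacked object as a $2 \times d$ matrix with rows $\mathbf{y}_i, \mathbf{y}_j$, matrix normality with row covariance $K = \begin{bmatrix} k_{ii} & k_{ij} \\ k_{ji} & k_{jj} \end{bmatrix}$ and column covariance $\I_d$ means the $d$ columns are independent bivariate normals with shared covariance $K$. Consequently,
$$\|\mathbf{y}_i - \mathbf{y}_j\|^2 \;=\; \sum_{k=1}^d \big[(\mathbf{y}_i)_k - (\mathbf{y}_j)_k\big]^2$$
decomposes as a sum of $d$ independent squared scalar differences, which is the structure that produces a Gamma.

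Next I would compute the marginal of a single term. For each $k$, the scalar $(\mathbf{y}_i)_k - (\mathbf{y}_j)_k$ is a linear combination of a bivariate normal and is therefore Gaussian with variance $\tau^2 := k_{ii} + k_{jj} - 2k_{ij}$. Its mean is zero under the natural (and implicit) assumption that the two rows of $\mu$ agree; otherwise the sum of squares would be a noncentral $\chi^2$ rather than a central Gamma. So each summand is distributed as $\tau^2 Z_k^2$ with $Z_k \sim \mathcal{N}(0,1)$ i.i.d.\ across $k$.

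Summing then gives $\|\mathbf{y}_i - \mathbf{y}_j\|^2 = \tau^2 \sum_{k=1}^d Z_k^2 \sim \tau^2 \chi^2_d$. The last step is just a parameterization check: $\chi^2_d$ has density proportional to $x^{d/2-1} e^{-x/2}$, so $\chi^2_d = \Gamma(d/2, 2)$ in shape--scale form, and multiplying by the constant $\tau^2$ rescales only the scale parameter, yielding $\Gamma(d/2, 2\tau^2)$. Substituting $\tau^2 = k_{ii} + k_{jj} - 2k_{ij}$ gives the claim.

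There is no substantive obstacle here; the only care required is to (i) unpack the matrix-normal notation into the column-independence statement cleanly, and (ii) consistently track the Gamma shape--scale convention so that scaling by $\tau^2$ multiplies the second parameter rather than the first. I would also briefly flag the zero-mean-difference assumption in the write-up, since a general $\mu$ with unequal rows would upgrade the answer to a noncentral $\chi^2$ and break the stated form.
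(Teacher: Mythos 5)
Your proof is correct and follows essentially the same route as the paper's: exploit column independence from the $\I_d$ right factor, note each coordinate difference is $\mathcal{N}(0,\tau^2)$ with $\tau^2 = k_{ii}+k_{jj}-2k_{ij}$, sum to get $\tau^2\chi^2_d$, and reparameterize as $\Gamma(d/2, 2\tau^2)$. Your explicit flag that the two rows of $\mu$ must agree (else one gets a noncentral $\chi^2$) is a point the paper's proof silently assumes, so that remark is a small but genuine improvement in rigor.
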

\begin{proof} of \cref{thm:norm_dist}.
\begin{align*}
    \forall k: y_i^k - y_j^k &\sim \mathcal{N} (0, k_{ii} + k_{jj} - 2k_{ij}) \overset{d}{=} \sqrt{k_{ii} + k_{jj} - 2k_{ij}}Z \\
    \Rightarrow \| \mathbf{y}_i - \mathbf{y}_j\|^2 = \sum_k^d (y^k_i - y^k_j)^2 &\overset{d}{=} (k_{ii} + k_{jj} - 2k_{ij}) \sum_k^d Z_k^2 \\
    &\overset{d}{=} (k_{ii} + k_{jj} - 2k_{ij}) \chi^2_d \\
    &\overset{d}{=} \Gamma(k=d/2, \theta=2(k_{ii} + k_{jj} - 2k_{ij})).
\end{align*}
\end{proof}

\subsection{Equivalence of GPLVMs and ProbDR}

Inference in classical GPLVMs \citep{gplvm} occurs by maximising the log-likelihood,
$$ \log \mathcal{MN}(\Y| \mathbf{0}, K_{\theta}(\X), \I). $$
This is equivalent, due to \cref{thm:weird_vi_mle}, to $\text{KL}(q(\S | \hat{\S}) \| p(\S | K(\X)))$ assuming,
\begin{align*}
    p(\S | K_{\theta}(\X)) &= \mathcal{W}(\S | K_{\theta}(\X), d), \\
    q(\S | \hat{\S}) &= \mathcal{W}(\S | \Y\Y^T/d, d).
\end{align*}
Note that a very similar KL minimisation also appears in \cite{gplvm}.

\subsection{A probabilistic interpretation of DRTree}

The objective that the DRTree \citep{drtree} algorithm is based on, can be written as follows,
\begin{align*}
    \mathcal{L} &= \| \Y - \mathbf{W} \X \|^2 + \dfrac{\lambda}{2} \sum_{ij} b_{ij} \| \mathbf{W} \X_{i:} - \mathbf{W} \X_{j:} \|^2 \\
    &= \text{tr}( (\Y - \X\mathbf{W})^T(\Y - \X\mathbf{W}) ) + \text{tr}(\lambda \L \X \X^T).
\end{align*}
where the second step is due to the results in \cite{neil-gmrf} and the fact that $\mathbf{W}\mathbf{W}^T$ is constrained to be $\I$. This objective is approximately a negative log posterior assuming,
\begin{align*}
    \Y | \X &\sim \mathcal{MN}(\X\mathbf{W}, \I, \I) \\
    \X | \L &\sim \mathcal{MN}(\mathbf{0}, (\lambda \L + \beta \I)^{-1}, \I) \\
    \L &\sim \text{Uniform}_{\text{graph Laplacians over trees}}
\end{align*}
for small $\beta$ and such that $\mathbf{W}\mathbf{W}^T = \I$. The optimisation occurs w.r.t. $\X, \mathbf{W}$ and $\L$. To optimize over $\L$ given the other parameters, as in \cite{drtree}, one must use Kruskal's algorithm. It's interesting to note that this is more akin to traditional LVMs (as in \cref{eqn:gp}) but with an interesting prior over the latents, constraining them to be tree structured via a graph covariance on a tree.

\section{Experimental details}
\label{app:expts}

\subsection{\cref{fig:expt_a}}

\cref{fig:expt_a} shows that a PPL such as Stan \citep{stan} can be used for DR via automated MAP inference corresponding to models specified using the appropriate ProbDR interpretation. The data used for this experiment was a ten data point subset of MNIST, with each digit augmented with twenty-five rotations. This provides a high-$d$, low-$n$ dataset. We compare against popular open-source implementations (of \texttt{umap-learn} and \texttt{scikit-learn}). \cref{fig:stan-code} shows the Stan program written for this experiment. \\

\begin{figure}
    \centering
    \includegraphics[width=\columnwidth]{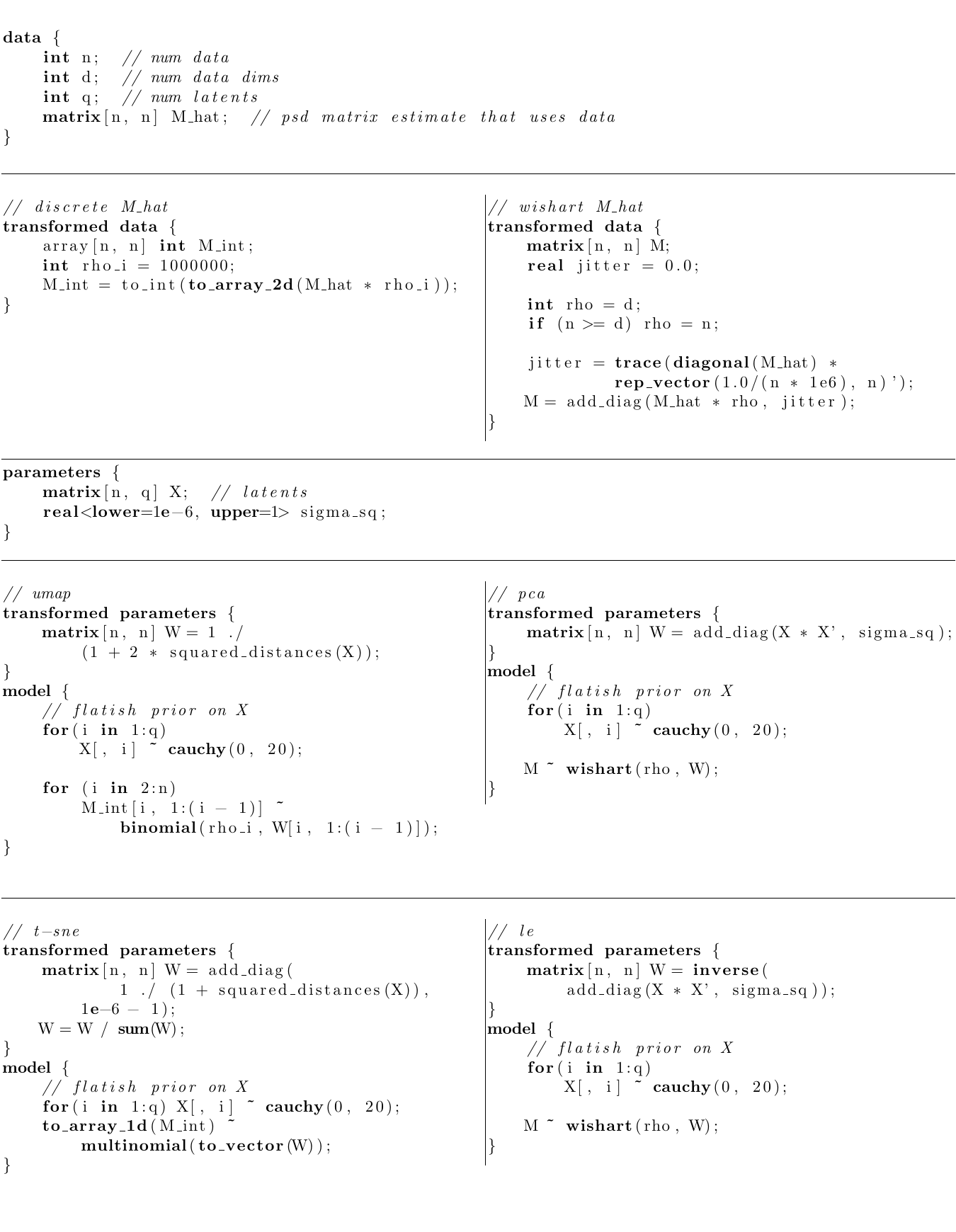}
    \caption{Stan code used for \cref{fig:expt_a}.}
    \label{fig:stan-code}
\end{figure}

\subsection{\cref{fig:expt_b}}

\cref{fig:expt_b} shows how one can predict at unseen locations using the ProbDR framework. The dataset used was the mouse brain cell RNA-seq transcriptomics dataset of \cite{mouse-data}. We used 50\% of the Lamp5 cluster for training and 50\% as the unseen test dataset (which results in about a thousand data points in each case). The data is composed of 3000 highly variable genes (thus, this is the data dimension). The data, $\Y_{\text{train}}$ and $\Y_{\text{test}}$ correspond to gene expression (logCPM). First, we use a community implementation of UMAP with default hyperparameters to obtain $\X_{\text{train}}$, and many such implementations can also embed $\X_{\text{test}}$ (typically by fixing $\X_{\text{train}}$ and by optimising a likelihood or cost function w.r.t. $\X_{\text{test}}$, as in \cite{lalchand2022generalised}). These implementations also output distributions over $\L_{\text{train}}$ and $\L_{\text{train}}$. as they are computed in a straightforward manner from the data and embeddings. \\

Then, we train hyperparameters (lengthscale $\kappa$, scale $\sigma_s$ and noise level $\sigma_n$) in the observation model,
$$ \Y_{\text{train}} | \Tilde{\L}_{\text{train}} \sim \mathcal{MN} \left(\mathbf{0}, \sigma_s^2 \left[\Tilde{\L}_{\text{train}} + \dfrac{2}{\kappa^2}\I \right]^{-1} + \sigma_n^2 \I, \I \right), $$
by optimising the ProbDR ELBO (\cref{eqn:probdr-obj}) w.r.t. to these parameters. The only term that contributes non-constants to the ELBO is,
$$ \mathcal{L} = \mathbb{E}_{q(\Tilde{\L}_{\text{train}}|\Y_{\text{train}})}[\log p_{\kappa, \sigma_n, \sigma_s}(\Y_{\text{train}}|\Tilde{\L}_{\text{train}})]. $$
Note that we use a normalised graph Laplacian above, as without it, the variational and model graph statistics are typically very different. Despite the fact that we don't have marginal consistency, the learned hyperparameters seem to be fine for usage with the augmented model,
$$ \begin{bmatrix}
    \Y_{\text{train}} \\
    \Y_{\text{test}}
\end{bmatrix}  | \Tilde{\L}_{\text{full}} \sim \mathcal{MN} \left(\mathbf{0}, \begin{bmatrix}
    C_{\text{train}} + \sigma_n^2\I & C_{\text{cross}} \\
    C_{\text{cross}}^T & C_{\text{test}}
\end{bmatrix}, \I \right), $$
where $\C$ is the corresponding (block of the) covariance matrix. The predictions can be computed as,
$$ \mathbb E(\Y_{\text{test}} | \Y_{\text{train}}) = \C_{\text{cross}} (\C_{\text{train}} + \sigma_n^2 \I)^{-1} \Y_{\text{train}}.$$
A similar treatment of unseen data in a semi-supervised setting appears in \cite{gf-gp}.

This model achieves a better performance than a VAE trained on the same data. We believe that superior performance can be attributed to the ability of UMAP to cluster similar cells more accurately than a vanilla VAE. \\


\subsection{\cref{fig:expt_c}}

\cref{fig:expt_c} shows prior samples of the ProbDR generative model (with UMAP assumptions). We sample a one dimensional $\X$ vector (uniformly), and construct an adjacency matrix using the UMAP adjacency probability equation. Then, we sample a one dimensional $\Y$ vector using a Matern-$\infty$ graph Gaussian process and plot samples against our sampled $\X$. Note that the samples are reminiscent of GP prior samples, suggesting that the $\X \rightarrow \L \rightarrow \Y$ construction may approximate a GP on a manifold. Some justification for these ideas is given by the result of \cite{belkin} on the graph Laplacian -- Laplace-Beltrami operator convergence (i.e. the discrete graph Laplacian, under some assumptions, is an approximation of the continuous operator, eigenfunctions of which are used for constructing smooth kernels on manifolds \citep{manifold-gp}).
\begin{figure}
    \centering
    \includegraphics[width=0.5\columnwidth]{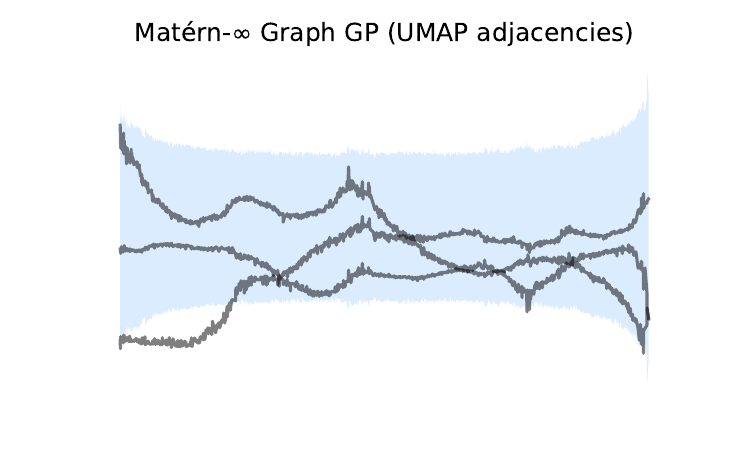}
    \caption{
    Samples of $\y$ plotted against $\X$, using a generative model, \\
    $\y | \Tilde{\L} \sim \mathcal N(\mathbf{0}, \exp[-12.5\L]),$ \\
    $\Tilde{\L} | \A' = \I - \mathbf{D^{\dagger}}^0.5 (\A' + \A'^T) \mathbf{D^{\dagger}}^0.5,$ \\
    $\forall i < j, \A_{ij}' | \X \sim \text{Bernoulli}(1/(1 + 2\| \X_{i:} - \X_{j:} \|^2)) $ and \\
    $\X \sim \text{Uniform}(-3, 3).$
    }
    \label{fig:expt_c}
\end{figure}

\section{A mean-field EM perspective on the UMAP Generative Model}
\label{app:em}

\subsection{Model set-up}

Assuming the model,
\begin{align*}
    p(\mathbf{A'}|\mathbf{X}) &= \prod_{i<j} \text{Bernoulli}(\A_{ij}'| \mathbf{\pi}_{ij}) \\
    \mathbf{A} &= \mathbf{A'} + \mathbf{A'}^T\\
    p(\mathbf{Y}|\mathbf{A}) &= \mathcal N(\mathbf{Y}| 0, (\mathbf{L} + \beta \mathbf{I})^{-1})
\end{align*}
The likelihood in this model can be written as:
\begin{align}
\label{eqn:em-like}
p(\mathbf{Y} | \mathbf{A}) = \frac{\left| \beta \mathbf{I} + \mathbf{L}\right|^{\frac{d}{2}}}{\left(2\pi\right)^{\frac{dn}{2}}} \exp\left(\frac{1}{2}\text{Tr}\left(\mathbf{Y}\mathbf{Y}^\top \left(\beta\mathbf{I} + \mathbf{L}\right)\right)\right) 
\end{align}

where $\mathbf{Y} \in \mathbb{R}^{n \times d}$ is our data in the form of a design matrix with $p$ features and $n$ data points. Define:
$$
\mathbf{L} = \rho\boldsymbol{\Phi}\mathbf{A_d}\boldsymbol{\Phi}^\top,
$$ 
where
$$
\boldsymbol{\Phi} = \left(\mathbf{1}^\top \otimes \mathbf{I} - \mathbf{I} \otimes \mathbf{1}^\top \right) \in \mathbb{R}^{n \times n^2}
$$
and 
$$
\mathbf{A_d} \in \mathbb{R}^{n^2 \times n^2}
$$
and has diagonal elements, where the $k$th diagonal is given by $\A_{ij}$ where $k = i + n(j-1)$, and we constrain $\A_{ii} = 0$ and $\A_{ij} = \A_{ji}$ and $\A_{ij}$ is either zero or one.

We introduce a mean-field variational distribution on $\mathbf{A}$:
$$q(\mathbf{A}) = \prod_{i<j}q(\mathbf{A}_{ij}).$$
Following \cite{blei-vi}, as part of mean-field variational inference, for every edge $ij$, we set $q(\mathbf{A}_{ij})$ proportional to:
$$
q\left(\mathbf{A}_{i j}\right) \propto \exp[ \mathbb E_{\mathbf{A_{-ij}}}( \log p(\mathbf{Y} \mid \mathbf{A})+\log p(\mathbf{A})) ].
$$
To write out these probabilities as an expectation given edges apart from $ij$, we formulate the determinant and trace terms in \cref{eqn:em-like}, 

\subsection{Calculating the Determinant}
The determinant can be calculated as:
$$
\begin{gathered}
|\beta \mathbf{I}+ \mathbf{L}|^\frac{d}{2}=\left| \beta \mathbf{I}+\rho\boldsymbol{\Phi} \mathbf{A} \boldsymbol{\Phi}^{\top}\right|^\frac{d}{2}
\end{gathered}
$$
$$
=\left| \beta \mathbf{I} +  \hat{\mathbf{L}}_{i j}+\A_{i j} \rho \boldsymbol{\phi}_{:, ij} \boldsymbol{\phi}_{:, ij}^{\top}+\A_{ji} \rho\boldsymbol{\phi}_{:, ji} \boldsymbol{\phi}_{:, ji}^{\top}\right|^{\frac{d}{2}}
$$
$$
=\left|\beta \mathbf{I}+\hat{\mathbf{L}}_{ij}\right|^{\frac{d}{2}}\left(1+2 \A_{i j} \rho\boldsymbol{\phi}_{:, ij}{ }^{\top}\left(\beta \mathbf{I}+\hat{\mathbf{L}}_{ij}\right)^{-1} \boldsymbol{\phi}_{:, ij}\right)^{\frac{d}{2}}
$$
$$
=\left|\beta \mathbf{I}+\hat{\mathbf{L}}^{ij}\right|^{\frac{d}{2}}\left(1+2  \A_{i j} \rho\boldsymbol{\phi}_{:, ij}{ }^{\top}\hat{\mathbf{C}}^{ij} \boldsymbol{\phi}_{:, ij}\right)^{\frac{d}{2}}
$$
where $\hat{\mathbf{L}}^{ij}$ corresponds to the graph Laplacian with edge $ij$ removed and $\hat{\mathbf{C}}^{ij}=\left(\beta \mathbf{I} + \hat{\mathbf{L}}^{ij}\right)^{-1}$ is the cavity covariance, i.e. the covariance of $p(\mathbf{Y}|\mathbf{A})$ if we remove edge $i,j$. Now we note that
$$
\phi_{:, ij}^{\top} \hat{\mathbf{C}}^{ij} \phi_{:, ij}=\hat{c}^{ij}_{i,i}+\hat{c}^{ij}_{j,j}-2 \hat{c}^{ij}_{ij}= \kappa_{ij}.
$$
where $\kappa_{ij}$ is the squared distance between the $i^{\text{th}}$ and $j^{\text{th}}$ point under the Gaussian governed by the cavity covariance.

\subsection{Calculating the Trace}
The trace term can be calculated as,
$$
-\frac{1}{2} \operatorname{Tr}\left(\mathbf{Y}\mathbf{Y}^{\top}(\beta \mathbf{I}+\mathbf{L})\right)
$$
$$=-\frac{1}{2} \beta \operatorname{Tr}\left(\mathbf{Y}\mathbf{Y}^{\top}\right)-\frac{\rho}{2} \operatorname{Tr}\left(\mathbf{Y}\mathbf{Y}^{\top} \boldsymbol{\Phi} \mathbf{A} \mathbf{\Phi}^{\top}\right)
$$
$$=-\frac{1}{2} \beta \operatorname{Tr}\left(\mathbf{Y}\mathbf{Y}^\top\right)-\frac{1}{2} \operatorname{Tr}\left(\mathbf{Y}\mathbf{Y}^{\top} \hat{\mathbf{L}}^{i j}\right)- \A_{ij} \rho d_{ij}
$$
where
$$
d_{ij}=\left(\mathbf{y}_{i, :}^{\top} \mathbf{y}_{i, :}-2 \mathbf{y}_{i, :}^{\top} \mathbf{y}_{j,:}+\mathbf{y}_{j,:}^\top \mathbf{y}_{j,:}\right).
$$

\subsection{Mean-field EM Perspective}

In this section, we describe an EM update step implied by assuming the model in \cref{eqn:em-like}.

For approximate EM, we assume a mean field approximation for our distribution $q(\A) = \prod_{i=1}^n \prod_{j<i}^n q(\A_{ij})$. By substituting the forms of the determinant and trace computed above (that split up the terms into terms that involve $\A_{ij}$ and terms that don't), we arrive at the equation below (ignoring terms that don't depend on $\A_{ij}$),
\begin{equation}
q(\A_{ij}) \propto \exp[ \mathbb E_{\mathbf{A_{-ij}}}(\log (1+2\A_{ij}\rho \kappa _{ij})^\frac{d}{2})
- \A_{ij}\rho d_{ij} + \A_{ij} \log \pi_{ij}^{(k-1)} -\A_{ij}\log (1-\pi_{ij}^{(k-1)})]. \; \footnote{Note we fix $\pi_{ij}^{(k-1)}$ and take the 1st variation of the ELBO wrt to $q$ to arrive at this result.}
\end{equation}

As $d \rightarrow \infty$, and setting $\rho = 1/d$,
$$ \log (1+2\A_{ij}\rho \kappa _{ij})^\frac{d}{2} \longrightarrow \log \exp \A_{ij} \kappa_{ij}, $$

and so the variational probabilities are approximately proportional to,
$$
q(\A_{ij}) \propto \exp[ \A_{ij} \mathbb E_{\mathbf{A_{-ij}}}(\kappa_{ij}) - \A_{ij}\rho d_{ij} + \A_{ij} \log \pi_{ij}^{(k-1)} -\A_{ij}\log (1-\pi_{ij}^{(k-1)})],
$$
$$
q(\A_{ij}) \propto \exp \left[ \A_{ij} \left[\mathbb E_{\mathbf{A_{-ij}}}(\kappa_{ij}) - \dfrac{d_{ij}}{d} + \log \dfrac{\pi_{ij}^{(k-1)}}{1-\pi_{ij}^{(k-1)}} \right] \right].
$$

Because $\A_{ij}$ can only be zero or one (and hence have a Bernoulli distribution), we obtain,
\begin{align*}
    q(\A_{ij}=1) &= \frac{\exp \left[\mathbb E_{\mathbf{A_{-ij}}}(\kappa_{ij}) - \dfrac{d_{ij}}{d} + \log \dfrac{\pi_{ij}^{(k-1)}}{1-\pi_{ij}^{(k-1)}} \right]}{ 1 + \exp \left[\mathbb E_{\mathbf{A_{-ij}}}(\kappa_{ij}) - \dfrac{d_{ij}}{d} + \log \dfrac{\pi_{ij}^{(k-1)}}{1-\pi_{ij}^{(k-1)}} \right]} \\
    &= \sigma\left(\left[\mathbb E_{\mathbf{A_{-ij}}}(\kappa_{ij}) - \dfrac{d_{ij}}{d} + \sigma^{-1}( \pi_{ij}^{(k-1)}) \right]\right).
\end{align*}
Using these results as part of the coordinate ascent variational inference (CAVI, \cite{blei-vi}), results in an expectation (E) update step,
$$ \forall i \neq j: q^{k+1}(\A_{ij}=1) = \pi_{ij}^{(k+1)} = \sigma\left(\left[\mathbb E_{\mathbf{A_{-ij}}}(\kappa_{ij}) - \dfrac{d_{ij}}{d} + \sigma^{-1}( \pi^k_{ij}) \right]\right). $$
Note that in the MFVI setting $q^{k+1}(\A_{ij}=1) = \pi_{ij}^{(k+1)}$ as this brings the second term in Equation \ref{eqn:probdr-obj} to 0. We believe that the term $\mathbb E_{\mathbf{A_{-ij}}}(\kappa_{ij})$ approximates $d_{ij}/d$ for large $d$ due to \cref{thm:norm_dist}. We hope that this framework allows for future research with the specified model methodology.

\section{Alternative generative models}
\label{app:alt-gen-models}

Here, we describe other potential generative models that can be specified given an adjacency matrix $\A$.

\subsection{Gaussian Bayesian Networks}
\label{bayesnet}
\cite{pml} describes the joint distribution of a Gaussian directed acyclical graphical model, providing a generative model for our framework. It appears as,
\begin{align*}
    \Y &\sim \mathcal{MN}(0, \M\M^T, \I),
\end{align*}
where $\M$ is a lower triangular matrix (the Cholesky decomposition of the covariance) such that $\M = (\I - \A)^{-1}$ and $\A$ is a row-normalised lower triangular adjacency matrix. This generative model is equivalent to:
$$ \Y_{ij} | \text{pa}(i) \sim \mathcal N \left( \frac{1}{|\text{pa}(i)|}\sum_{k \in \text{pa}(i)} \Y_{kj}, 1 \right), $$
where $\text{pa}(i)$ is the set of points that are parents to point $i$, and $|.|$ denotes the size of a set.

\subsection{Graph Convolutional Gaussian Processes}
\label{section:decoders-conv}

The graph convolutional Gaussian process (GCGP), described in \cite{gcgp, gcgp0}, is defined as
\begin{equation*}
    \Y \sim \mathcal{MN}(0, \mathbf{S}^k \C [\mathbf{S}^k]^T, \I),
\end{equation*}
where $\C$ is a kernel matrix, $\mathbf{S}^k$ is a normalized adjacency matrix defined using $\Tilde{\A} = \A + \I$, raised to the $k$-th power. Taking $\C$ to be an identity matrix provides a potential generative model for our framework.

Note that due to the Neumann expansion, the Cholesky decomposition of the covariance in \cref{bayesnet} can be written as,
$$\mathbf{M} = (\I - \tilde{\A}^T)^{-1} = \sum_{k=0}^\infty (\tilde{\A}^{T})^{k}.$$
This shows that a sum of GCGPs (using increasing powers of the adjacency matrix, without self edges) approximates the covariance in \cref{bayesnet} when the graph is a DAG. The expansion also shows that the covariance of \cref{bayesnet} is composed of all possible hops in the graph, as powers of an adjacency matrix have the interpretation of storing the number of paths from each node to another \citep{barber}.


\section{Connections to other work}

The proposed framework could be changed by directly specifying a generative model on data $\Y$ conditioning on latent variables $\X$ and a further set of latent variables $\X'$ that don't affect $\A'$ directly. This is illustrated in \cref{graph-extension}. Motivations to do this include improving the separation of clusters of points that belong to different labels (e.g. cell types) in recovered embeddings by a GPLVM or a VAE, which (t-)SNE and UMAP can be more performant at.

\begin{figure}[h]
\centering
\begin{tikzpicture}
\node[latent] (A) {$\A'$};
\node[latent, above=of A] (X) {$\X$};
\node[obs, left=of A] (Y) {$\Y$};
\node[latent, above=of Y] (Xp) {$\X'$};

\edge{X}{A};
\edge[dashed]{A}{Y};
\edge[dashed]{X}{Y};
\edge[dashed]{Xp}{Y};

\node[above] at (current bounding box.north) {Generative Model};
\end{tikzpicture}
\qquad
\begin{tikzpicture}
\node[obs] (A) {$\A'$};
\node[latent, above=of A] (X) {$\X$};
\node[obs, right=of X] (Y) {$\Y$};

\edge{Y}{A};
\edge[dashed]{Y}{X};

\node[above] at (current bounding box.north) {Variational Approximation};
\end{tikzpicture}
\caption{Class of possible extensions of the framework, where latent variables $\X$ and a further set of latent variables $\X'$ that don't affect $\A'$ may be used to describe the distribution of $\Y$ directly. Dashed edges show connections that may or may not be added and greyed nodes show observed random variables or parameters.}
\label{graph-extension}
\end{figure}
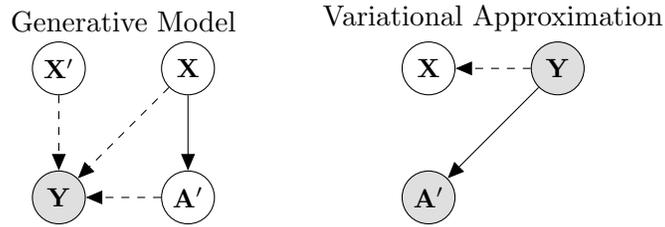

In such models, the ELBO is given by,
\begin{align*}
\mathcal L = \; & \mathbb{E}_{q(\A'|\Y)q(\X)q(\X')}[\log p(\Y|\A', \X', \X)] \\
-& \text{KL}(q(\A'|\Y)||p(\A' | \X)) \\
-& \text{KL}(q(\X|\Y)||p(\X)) \\
-& \text{KL}(q(\X')||p(\X')).
\end{align*}
Such objectives, where a t-SNE/UMAP style loss is added to that of another model (e.g. scvis \citep{scvis} and GPLVMs with t-SNE objectives \citep{gplvm-tsne}), and models that use neural networks for amortised inference of $\X$, appear frequently in literature.

\end{document}